\theoremstyle{plain}
\newtheorem{thm}{\protect\theoremname}
\theoremstyle{plain}
\newtheorem{cor}[thm]{\protect\corollaryname}
\theoremstyle{plain}
\newtheorem{lem}[thm]{\protect\lemmaname}
\DeclareRobustCommand\onedot{\futurelet\@let@token\@onedot}
\def\@onedot{\ifx\@let@token.\else.\null\fi\xspace}
\def\eg{{e.g}\onedot} 
\def\ie{{i.e}\onedot}
\def\wrt{w.r.t\onedot} 
\providecommand{\corollaryname}{Corollary}
\providecommand{\lemmaname}{Lemma}
\providecommand{\theoremname}{Theorem}
\newtheorem{defn}{Definition}
\global\long\def\perturb{{\boldsymbol{\mathit{\delta}}}}%
\global\long\def\cramloss{{L}^{\textnormal{CrAM}}}
\global\long\def\cramplusloss{{L}^{\textnormal{CrAM$^{+}$}}}
\global\long\def\fun{L}%
\global\long\def\w{\boldsymbol{\mathit{\theta}}}%
\global\long\def\wextrap{\boldsymbol{\mathit{\widetilde{\theta}}}}%
\global\long\def\x{\boldsymbol{\mathit{x}}}%
\global\long\def\y{\boldsymbol{\mathit{y}}}%
\global\long\def\gg{\boldsymbol{\mathit{g}}}%
\global\long\def\ggtil{\boldsymbol{\mathit{\widetilde{g}}}}%
\global\long\def\pphi{\boldsymbol{\mathit{\phi}}}%
\renewcommand{\paragraph}[1]{ \noindent \textbf{#1}}
\title{CrAM: A Compression-Aware Minimizer}
\author{Alexandra Peste$^{1}$\thanks{Correspondence to: \texttt{alexandra.peste@ist.ac.at}} ~~ Adrian Vladu$^{2}$ ~~ Eldar Kurtic$^{1}$ ~~ Christoph H. Lampert$^{1}$ ~~ Dan Alistarh$^{1,3}$  \\
$^1$Institute of Science and Technology Austria (ISTA) \quad $^2$ CNRS \& IRIF \quad $^3$ Neural Magic, Inc. \\
}
\begin{document}

\maketitle

\begin{abstract}
  Deep neural networks (DNNs) often have to be compressed, via pruning and/or quantization, before they can be deployed in practical  settings. 
  In this work we propose a new compression-aware minimizer dubbed CrAM that modifies the optimization step in a principled way, in order to produce models whose local loss behavior is \emph{stable} under compression operations such as pruning. Thus, dense models trained via CrAM should be compressible post-training, in a single step, without significant accuracy loss.   
  Experimental results on standard benchmarks, such as residual networks for ImageNet classification and BERT models for language modelling, show that CrAM produces dense models that can be more accurate than the standard SGD/Adam-based baselines, but which are stable under weight pruning: specifically, we can prune
  models in one-shot to 70-80\% sparsity with almost no accuracy loss, and to 90\% with reasonable ($\sim 1\%$) accuracy loss, which is competitive with gradual compression methods.
  Additionally, CrAM can produce sparse models which perform well for transfer learning, and it also works for semi-structured 2:4 pruning patterns supported by GPU hardware. The code for reproducing the results is available at: \url{https://github.com/IST-DASLab/CrAM}.
\end{abstract}

\section{Introduction}
\vspace{-0.5em}

The massive recent progress of deep learning models has been accompanied by an increase in computational costs~\citep{thompson2020computational}. In turn, this has led to significant interest in \emph{model compression} techniques in order to reduce these costs. 
For many existing models, compression techniques such as distillation~\citep{hinton2015distilling}, pruning~\citep{hoefler2021sparsity} and quantization~\citep{gholami2021survey} can usually reduce the number of parameters or FLOPs of a given model by up to an order of magnitude with relatively little accuracy loss. However, performant compression still usually requires re-training or fine-tuning the model separately for each compression target, provided by the user as a target sparsity and/or quantization level. 
In turn, this compression process can be cumbersome and error-prone, as it requires additional computation and hyper-parameter tuning for each run. 

In this work, we propose \emph{Compression-Aware Minimization (CrAM)}, a method for training neural networks, which results in models that are easily compressible
\emph{one-shot},
while still being highly-accurate. Specifically, CrAM enables training a single (dense) model, which can later be compressed to different target levels, with minimal or no recalibration. 
Such flexibility is desirable, as models can be trained once, and then deployed on multiple devices, with different specifications. 
Having a single model that can be easily configured to meet the computational requirements of a specific device can both reduce the overall computational cost, and also allow easier customization to individual devices. 

CrAM is loosely-inspired by the recently-introduced sharpness-aware minimizer (SAM) \citep{foret2020sharpness}, which trains models that potentially converge to flatter minima, leading to better generalization compared to SGD-type baselines, by biasing the process towards minima of \emph{uniformly low loss}. Multiple subsequent works have investigated and improved upon the original SAM algorithm, by either obtaining better generalization~\citep{kwon2021asam}, or by reducing the computational costs of SAM training~\citep{liu2020toward, du2021efficient}. We are the first to carry over this idea to the task of obtaining \emph{compressible} models. 
Roughly speaking, CrAM works by optimizing not against the original ``dense'' model, but over a compression projection applied to the intermediate model iterate, at every optimization step. 
Thus, the CrAM update aims to bias optimization towards iterates that have both \emph{low loss} and are \emph{robust under one-shot compression}. 
Similarly to SAM, CrAM is simple to implement as part of a regular training loop and has a single scaling hyper-parameter, for which we provide a well-performing default value.  
We detail the CrAM algorithm and provide a theoretical motivation leveraging fundamental results in robust optimization~\citep{danskin2012theory} in Section~\ref{sec:method}. 

To complement our algorithmic contribution, we perform an extensive experimental analysis of CrAM. 
We mainly focus on compression via weight pruning, but we also show that CrAM is compatible with weight quantization. 
Generally, CrAM models trained on large-scale image classification or language modelling tasks can improve over the dense baseline performance, while being very robust to one-shot pruning, at different sparsity levels. For image classification, CrAM can train a highly-accurate dense ResNet50 model on ImageNet, that can be pruned \emph{in one-shot} to 80\% and 90\% sparsity, and is competitive in terms of accuracy relative to state-of-the-art \emph{gradual pruning methods}, following an inexpensive Batch Normalization re-tuning step on a small calibration set. 

Moreover, we show that full CrAM training is not necessary for good performance: specifically, a short CrAM finetuning period is sufficient to substantially improve one-shot pruning accuracy. 
For instance, using CrAM to transfer the standard BERT-base model~\citep{Devlin2019BERTPO} on SQuADv1.1 question-answering~\citep{Rajpurkar2016SQuAD1Q}, we obtain models that are both more accurate and more compressible than with  optimizers such as Adam~\citep{kingma2014adam} or SAM~\citep{foret2020sharpness}.
In addition, a short ($\leq$ 2 epochs) finetuning of the \emph{sparse} model can provide substantial additional improvements: the 80\%-sparse CrAM finetuned model reaches higher accuracy than the highly-competitive gradual pruning methods PLATON~\citep{zhang2022platon} and Movement Pruning~\citep{sanh2020movement}, at a fraction of the training budget. 

CrAM lends itself to several extensions: it can be used with different layer-wise sparsity distributions, semi-structured N:M sparsity patterns, and one-shot pruning techniques. 
Sparse CrAM models can be successfully used for sparse transfer learning, where they can perform well on a wide range of ``downstream'' target tasks, even when compared to pruning methods that train a separate model for each sparsity level.
We also provide evidence that the CrAM update can produce models that are robust to quantization.

Similar to SAM~\citep{foret2020sharpness}, one limitation is the added computational cost, as CrAM requires an additional backwards pass for the model perturbation. 
This can be addressed by only performing limited finetuning via CrAM instead of full retraining, or by only performing a regular optimization step for a fraction of the time, both of which we show to have a limited impact on accuracy. 
Moreover, our approach is also compatible with efficient SAM-type updates~\citep{liu2020toward, du2021efficient}.
We also provide a well-performing variant of CrAM that uses sparse gradients, which could be leveraged by frameworks with support for sparse back-propagation~\citep{sparseprop}.

\vspace{-0.5em}
\section{Related Work} 
\vspace{-0.5em}

\paragraph{Sharpness-Aware Minimization (SAM).} The recently introduced SAM optimizer~\citep{foret2020sharpness} aims to minimize  loss \emph{sharpness}; this in turn should lead to flatter local minima, with better generalization. 
The authors show that SAM-trained models have higher validation accuracy compared to vanilla SGD-type baselines, that their performance continues to improve with prolonged training,
and that they can also be successfully used for transfer learning. One important drawback of SAM is its computational overhead, as it requires twice as many forward-backward passes through the network. Subsequent work has focused on reducing computational cost by, for example, reducing the frequency of the extra gradient steps \citep{liu2022towards}, computing the perturbations on a subset of the parameters \citep{du2021efficient}, or by proposing a new trajectory loss to replace the sharpness definition \citep{du2022sharpness}. We draw inspiration from properties of the initial SAM method proposed by \citet{foret2020sharpness}. 
Instead of attempting to minimize the maximum local increase loss (sharpness), our goal is to minimize the maximum local increase in loss due to compression. 

\paragraph{Training prunable networks.} The increasing scale of deep neural networks have made their deployment to edge devices dependent on compression techniques, such as quantization and/or pruning. 
While post-training quantization can be efficient and successful without any retraining~\citep{frantar2022optimal},
in the case of pruning the gold standard is still training a separate model for every target sparsity level \citep{zhu2017prune, singh2020woodfisher, evci2020rigging, peste2021ac}, which can be expensive.
A potential solution would be training a single dense model, which either contains multiple smaller ones that can be easily deployed, or which is itself \emph{prunable} at multiple sparsity levels, without additional retraining. For example, ``once-for-all'' (OFA)~\citep{cai2019once} can train a large network that contains multiple specialized sub-nets, adapted to 
different resource constraint devices. However, obtaining the large OFA network is extremely expensive, and requires intensive finetuning to ensure a good performance for the sub-nets. A similar idea
has also been explored for automatic speech recognition~\citep{wu2021dynamic}. An orthogonal direction is to obtain ``slimmable neural networks'' \citep{slim2019yu, yu2019universally, yu2019autoslim}, by training a single model that can be executed at different widths; this is usually achieved by performing multiple backpropagations using all the predefined widths, at each optimization step.
Furthermore, BigNAS~\citep{yu2020bignas} is inspired by these approaches, and achieves a similar goal to OFA, without additional finetuning. Compared to our method, these works focus on structured pruning, and require extensive finetuning, or more forward-backward steps at each iteration. Related to one-shot pruning, Only Train Once (OTO)~\citep{chen21oto} has been proposed 
for structured pruning, to train a large model that is easily slimmable one-shot. 
While we obtain better results than OTO for the same sparsity level, the two methods are not directly comparable, since we focus on \emph{unstructured sparsity}. 

Our work is more closely related to \citet{miao2021learning, zimmer2022compression}, which propose leveraging Stochastic Frank-Wolfe (SFW) \citep{reddi2016stochastic} to encourage the weights to lie in a convex hull spanned by sparse vectors; this would make the model prunable one-shot, without any finetuning. The methods proposed in \cite{miao2021learning, zimmer2022compression} result in highly-prunable models on relatively-small tasks.
Under the same experimental setting, CrAM is able to match or outperform these methods;
for instance, CrAM can prune VGG-16 trained on CIFAR-10 in one-shot to 95\% sparsity without accuracy loss, outperforming SFW by more than 2\% Top-1 accuracy. 
More importantly, we show that CrAM produces models compressible in one-shot at both ImageNet scale and BERT language modeling scale. 
Remarkably,
with one-shot pruning CrAM can offer competitive performance to \emph{gradual pruning} methods, whether they are designed for CNNs~\citep{kusupati2020soft, lin2020dynamic} or for language modelling~\citep{sanh2020movement, zhang2022platon}.

\vspace{-0.5em}
\section{The Compression-Aware Minimizer (CrAM)}
\label{sec:method}
\vspace{-0.5em}
\subsection{Background}
\label{subsec:background-cram}

We now give an overview of our method, together with the corresponding algorithm and generalizations. CrAM aims  to train models that are ``compressible'' in one-shot, following training, via sparsity or quantization. 
For this, we consider a compression operator $C$, such as for example Top-K,
which keeps the highest $K$ elements in absolute value of a tensor, and sets the rest to 0.  We say that a model is easily compressible if small perturbations do not affect its performance after compression. To enforce this during training, we optimize against the dense model perturbation which has the largest impact on the compressed model. We want to minimize the ``compression-aware'' (CrAM) loss, defined as:
\begin{equation}\label{eq:cramloss-defn}
    \cramloss(\w) = \max_{\|\perturb\|\leq \rho} \fun(C(\w + \perturb)),
\end{equation}
where $\w$ is the vector of parameters, $L$ is the model loss and $\perturb$ is a norm-bounded perturbation. 

\noindent We approximate $\cramloss(\w)$
by taking a gradient ascent step in the direction of the current update, followed by a projection using the compression operator. This is inspired by the iterative hard thresholding (IHT) algorithm used for optimizing functions under sparse constraints \citep{blumensath2008iterative, foucart2011hard, foucart2012sparse}. To obtain the gradient with respect to the parameters, we employ a straight-through estimator, and use the gradient evaluated at the perturbation.
This gives us the following update for minimizing the CrAM loss at step $t+1$:
\begin{equation}
    \wextrap_t = C(\w_t + \rho \cdot \nabla \fun( \w_t)) \qquad \qquad
    \w_{t+1} = \w_t - \eta \nabla \fun(\wextrap_t)\,.
    \label{eq:cram}
\end{equation}

Alongside improving robustness to compression, we want to maintain the quality of the dense model, which cannot be guaranteed by only optimizing the CrAM loss. Therefore, we propose to explicitly optimize for the performance of the dense model. The main objective we use and which fulfills these criteria is the composite \emph{CrAM$^{+}$ loss function}, defined as:
\begin{equation}
   \cramplusloss{(\w)} = \fun(\w) + \cramloss(\w) \,
   \label{eq:cram-plus-loss}
\end{equation}

This requires a simple modification to the CrAM update, at no extra cost, by simply adding the gradient $\nabla L(\w_t)$, before the next update of the parameters $\w_{t+1}$. For $\wextrap_t = C(\w_t + \rho \nabla \fun(\w_t))$, the CrAM$^{+}$ update is the following:
\begin{equation}
    \w_{t+1} = \w_t - \eta \cdot (\nabla \fun(\wextrap_t) + \nabla\fun(\w_t))\,.
    \label{eq:cram-plus-update}
\end{equation} 

We can also add different regularization terms to the objective in Equation~\ref{eq:cram-plus-loss}, such as weight decay. 

Furthermore, we can consider a distribution over a set of compression operators $\mathcal{C}$ and approximate the expected CrAM loss under multiple types of compression, by sampling a different operator at each step. We refer to this version as CrAM/CrAM$^+$-Multi.

\begin{algorithm}[t]
\small{
\caption{Compression-Aware Minimization (CrAM / CrAM$^+$) \label{alg:cram}}

\begin{algorithmic}[1]

\Require Compression operators $\mathcal{C}=\{C_1, C_2, \dots, C_M\}$, training data $S$, training iterations $T$, learning rate $\eta$, perturbation step size $\rho$

\State Initialize the weights $\w_0$ 
\While{$t \leq T$}
    \State Sample batch $x\in S$
    \State Compute loss $L(\w_t; x)$ and gradient $\gg_t = \nabla L(\w_t;x)$ 
    \State Uniformly choose a compression operator $C\in\mathcal{C}$
    \State Get perturbed weights $\wextrap_t = C(\w_t + \rho \gg_t)$
    \If{$C$ = Top-K} 
    \State Let $M_t$ be the linear projection operator  onto the support of the largest $K$ coordinates of $\left|\w_t + \rho \gg_t\right|$, such that $\wextrap_t = M_t(\w_t + \rho \gg_t)$
    \State $\ggtil_t = M_t \nabla L(\wextrap_t;x)$
    \Else
    \State $\ggtil_t = \nabla L(\wextrap_t;x)$
    \EndIf
    \If{use CrAM$^{+}$} 
        \State $\ggtil_t \leftarrow \ggtil_t + \gg_t$
    \EndIf
    \State Update the weights using a gradient descent step: $\w_{t+1}= \w_t - \eta \cdot \ggtil_t$
    
\EndWhile
\State \Return $\w_T$
\end{algorithmic}
}
\end{algorithm}

\subsection{Theoretical Justification of the CrAM Update}\label{sec:theory-derive}
To derive the CrAM update, and justify the choices made in designing our training method, we start from the optimization objective defined in Equation~\ref{eq:cramloss-defn}. 
Although the loss includes an inner maximization problem, together with a potentially problematic compression operator, we show that under mild assumptions we can efficiently approximate a descent direction.
We rely on a well-known theorem from robust optimization~\citep{danskin2012theory}, which allows one to obtain descent directions for min-max objectives under a broad range of assumptions. Using Danskin's theorem (Theorem~\ref{thm:danskin} from Appendix~\ref{app:robust-opt}) we obtain that by computing the maximizer of the inner problem
\begin{equation}
\perturb^*  = \arg\max_{\|\perturb\| \leq \rho} L(C(\w + \perturb))\,,
\end{equation}
and letting  $\pphi = \w + \perturb^*$, which compresses to the extrapolated iterate $\wextrap = C(\pphi)$,
we obtain a descent direction $-\nabla L(C(\pphi))$.
Implementing this approach faces two difficulties -- first, to compute the gradient we must back-propagate through the composition of functions $L(C(\cdot))$, which 
may cause trouble since $C$ is not necessarily differentiable; second, and more importantly, it is unclear how to solve the inner maximization problem. \looseness=-1

To address the first issue, we may choose to use a straight-through gradient estimator~\citep{bengio2013estimating}, which permits us to only backpropagate through $L$, and use $\nabla L(\wextrap)$ instead of the true gradient. To increase precision, in the case where compression is performed via Top-K
we interpret $C$ as a ``mask'' operator $M$ which zeroes out a subset of coordinates dependent on $\pphi$. Since except for articulation points, $M_t$ is constant and does not change as the argument varies, we approximate $\nabla L(C(\pphi)) \approx M \nabla L(M \pphi ) = M \nabla L( \wextrap)$.

To address the second issue, rather than exactly maximizing the inner problem, we instead seek a good enough maximizer using a standard iterative method. For this, we choose projected gradient ascent, which provides theoretical guarantees, even when the projection is performed onto non-convex domains~\citep{peste2021ac}. For instance, if the compression operator is magnitude pruning, this becomes the iterative hard thresholding (IHT) method, frequently employed in the sparse recovery literature~\citep{blumensath2008iterative}.
Thus, to reach a good iterate within this specific domain, in practice we perform a single step of (projected) gradient ascent, which matches the IHT iteration:
\begin{equation}\label{eq:ihtiter}
\wextrap_t = C\left(\w_t + \rho \cdot \nabla \fun(\w_t)\right)\,.
\end{equation}

In Appendix~\ref{sec:theory-appendix}, we provide a full re-derivation of the CrAM update in Equation~\ref{eq:cram} under fairly reasonable assumptions on the objective function, along with a detailed discussion on the necessity of these assumptions; our derivations also hold for CrAM$^+$. As a side result, we also obtain a simple re-derivation of the SAM update.

\vspace{-0.5em}
\subsection{Implementation Details and Extensions}

\paragraph{Multiple compression types.} CrAM can be used to train models that are robust to multiple types of compression operators, by choosing between multiple compression projections at each optimization step. Examples include pruning using different sparsity levels or quantizing at different precisions. We illustrate the general CrAM algorithm which handles multiple compression operators, and includes the explicit optimization of the dense model, in Algorithm~\ref{alg:cram}. 
We found that CrAM$^{+}$ with a different randomly chosen compression at each optimization step typically achieves the best trade-off between an accurate dense model and robustness to multiple one-shot compression schemes post-training; we call this variant \emph{CrAM$^+$-Multi}, and will become the main method used in our experiments. When optimizing for robustness against sparse perturbations, we use the Top-K operator at each step, and choose the sparsity level uniformly at random among a set of predefined values.

\paragraph{Addressing the computational overhead.} 
While CrAM requires twice as many forward-backward passes compared to regular training, we can reduce this overhead for TopK-CrAM,
by making use of the sparsity in the intermediate updates. We found that using only the gradients from the support of $\wextrap_t$ in $\nabla L(\wextrap_t)$ 
improves both the resulting dense model obtained with TopK-CrAM, as well as its robustness to one-shot pruning. This observation is motivated by the fact that the straight-through \citep{bengio2013estimating} gradient estimator with the identity function might be suboptimal \citep{yin2019understanding}, and 
better estimators can be defined using different functions.
As seen in Section~\ref{sec:theory-derive}, we can assume, via Danskin's theorem~\citep{danskin2012theory}, that we can obtain descent directions for $\cramloss(\w_t)$ by evaluating $\nabla L(C(\pphi_t))$, where $\pphi_t$ is the extrapolated point $\pphi_t = \w_t + \rho \nabla L(\w_t)$. To evaluate the gradient, we may use a straight-through estimator. 
For Top-K, $C$ is as an operator $M_t$ which zeroes out a subset of coordinates dependent on $\pphi_t$. Provided that $M_t$ is constant and does not change as the argument varies, we can approximate $\nabla L(C(\pphi_t)) \sim M_t \odot \nabla L(M_t \pphi_t )$. As both the iterate and gradient estimator are sparse, this implies a theoretical speed-up. 
We further show in Appendix~\ref{subsec:infreq-masks} results obtained from CrAM using infrequent mask updates.

\paragraph{Alternative Updates.} We note that alternative compression-aware updates can be derived. One such derivation, which we call Compressed-SAM (C-SAM), uses the gradient of the compressed model to perturb the dense model. While training with C-SAM can also result in models robust to one-shot pruning, we noticed that typically the accuracy of the resulting dense models is lower, compared to using CrAM. Moreover, CrAM can be easily modified to also optimize the dense model, while for C-SAM this modification would require an additional forward-backward pass. We also examine the importance of the extra gradient step in CrAM, by comparing against simply applying the Top-K operator to the parameters. We provide an ablation study in Appendix~\ref{app:cifar10-ablation}.

\paragraph{Statistics Correction.} It is well-known~\citep{hubara2021accelerated,frantar2022optimal} that pruning weights in a single step at high sparsity levels can have a large negative effect on normalization layers, due to a mismatch between layer statistics, \eg the running mean and variance of Batch Normalization (BN) layers, computed during training, and those of the pruned model. To correct for this, following prior work, we keep a subset of randomly chosen 1000 training samples,
to which we apply standard training augmentations, and which are used post-pruning for resetting the BN statistics of the pruned model. We note that this procedure, which we refer to as Batch Norm Tuning (BNT) is very inexpensive, and does not finetune any model parameters. Furthermore, when training CrAM training for image classification, we only track the BN statistics on the dense model, before applying the compression perturbation. In the case of BERT models, we do not apply any statistics corrections.

\section{Experiments}
\label{sec:exps}
\vspace{-0.5em}

Our experimental validation mainly focuses on sparsity, obtained by applying the Top-K operator, in the context of CrAM (\ie TopK-CrAM).
The main method we propose is CrAM$^+$ with multiple sparsity levels chosen uniformly at random at each step (CrAM$^+$-Multi). We also experiment with particular cases of this method, where only one sparsity level is used (\eg CrAM$^+$-k70), and also with the initial CrAM method with low sparsity (\eg CrAM-k50).
For image classification experiments,
all one-shot pruning results are presented after BNT on a subset of 1000 training samples, \ie 100 inference steps on batches of size 128, using standard random augmentations.

\vspace{-2pt}
\subsection{ImageNet Experiments}
\label{subsec:exps-imgnet}

\paragraph{General Setup.}
We use a standard setup for training our ImageNet/ResNet50 models, similar to~\citet{foret2020sharpness}, which we describe in Appendix~\ref{app:img-hyperparams}.
To match the number of backpropagation steps of CrAM, we additionally train the dense baseline for twice as many epochs. 
We have found that $\rho=0.05$ recommended by the authors of SAM \citep{foret2020sharpness} is a good value for CrAM, and we have kept it for all our ImageNet experiments. 
As stated, after one-shot pruning, we perform BNT on a subset of 1000 training samples (\eg one per class), with standard augmentations. We show in the Appendix~\ref{app:BNT} that the accuracy after BNT is extremely stable, w.r.t. the choice of calibration set. 

\begin{minipage}[h]{0.55\textwidth}
        \includegraphics[height=1.4in]{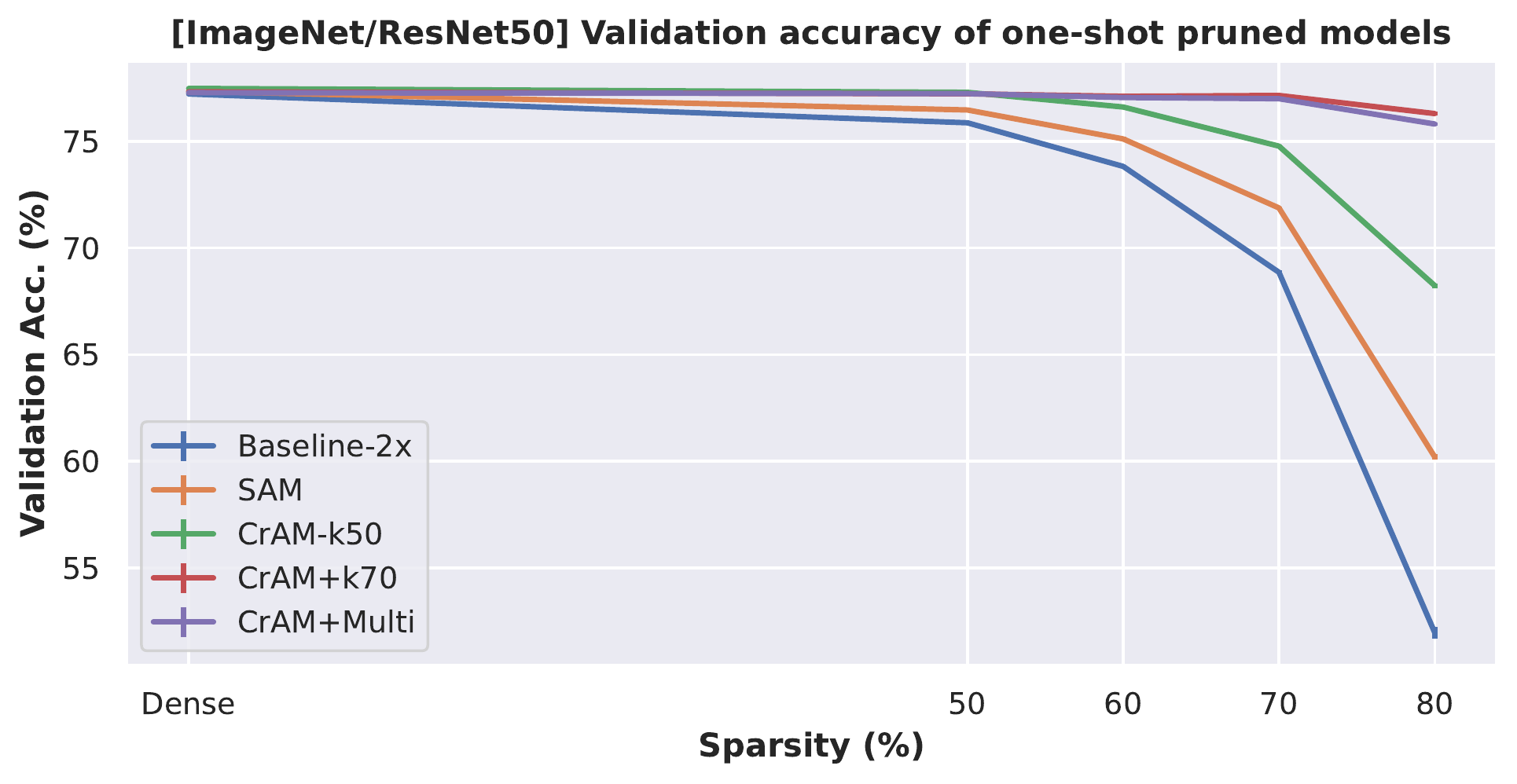}
        \captionof{figure}{One-shot pruning results, averaged over 10 BNT trials using randomly chosen calibration sets of 1000 samples. }
    \label{fig:one-shot-imagenet}
\end{minipage}
\hfill
\begin{minipage}[h]{0.4\textwidth}
    \vspace{14pt}
    \centering
   \scalebox{0.75}{
    \begin{tabular}{ccc}
     \toprule
        Model & \multicolumn{2}{c}{Sparsity} \\ & 80\%  & 90\%  \\
    \midrule
        CrAM$^{+}$-Multi & 75.8 & 74.7 \\
        WoodFisher  & 76.7 & 75.3 \\
        AC/DC  & 76.2 & 75.2 \\
        STR  & 76.1 & 74.3 \\ 
        DPF & 75.1 & 74.6 \\
        
    \bottomrule
    
    \end{tabular}
    }
    \vspace{20pt}
    \captionof{table}{Accuracy of one-shot pruned 
    CrAM$^{+}$-Multi models vs. pruning methods.}
    \label{tab:cram-imgnet-compare}
\end{minipage}

\paragraph{Results for one-shot pruning.} We validate that different versions of CrAM models are robust to post-training compression,
by testing their accuracy after one-shot pruning, at different sparsity levels. 
We train models using CrAM-k50, CrAM$^{+}$-k70 and CrAM$^{+}$-Multi and global magnitude; for CrAM$^{+}$-Multi we choose the sparsity level randomly at each step from the set $\{50\%, 70\%, 90\%\}$.
Using CrAM$^{+}$ is crucial for preserving the dense model accuracy, 
at higher sparsities ($\geq$ 70\%) during training; however, 
when training with low sparsities, vanilla CrAM can result in dense models that are slightly better than the baseline (\eg CrAM-k50).
For both CrAM$^{+}$-k70 and CrAM$^{+}$-Multi we use sparse gradients for the Top-K model perturbation, as described in Section~\ref{sec:theory-derive}.
This improved substantially the accuracy after one-shot pruning, as well as the resulting dense model. Additionally, this could offer training-time speed-up compared to, for example, using dense gradients or training with SAM with the right framework support.
We include an ablation study on the effects of sparse gradients in Appendix~\ref{app:sparse-grads}. \looseness=-1

The results from
Figure~\ref{fig:one-shot-imagenet} 
show that CrAM models are substantially more robust to one-shot pruning, compared to standard SGD or SAM training. CrAM models do not lose accuracy at lower sparsity (\eg at 50\% for all or 70\% for CrAM$^{+}$-k70 and CrAM$^{+}$-Multi) and the resulting dense models tend to outperform the baseline (\eg the ImageNet validation accuracy for CrAM$^+$-Multi is 77.3\% accuracy vs. 76.95\% for the baseline). Overall, CrAM$^+$-Multi is the most robust to one-shot pruning across different sparsity levels. 
For example, as shown in Table~\ref{tab:cram-imgnet-compare}, the results at 80\% and 90\% sparsity are competitive with state-of-the-art pruning methods, such as WoodFisher~\citep{singh2020woodfisher} or AC/DC~\citep{peste2021ac}, and even surpass some pruning methods 
(STR~\citep{kusupati2020soft}, DPF~\citep{lin2020dynamic}). 
We emphasize that CrAM requires a single round of training (albeit with twice as many forward-backward passes, compared to regular SGD), while standard pruning methods require training separately for each target sparsity, sometimes from a pretrained model (\eg WoodFisher). 
\looseness=-1
In addition to global magnitude, CrAM can be used successfully with uniform magnitude pruning; we show additional results in the Appendix~\ref{app:unif}, as well as evidence that CrAM models are robust to sparse distributions different from those used during training.

\paragraph{Results for semi-structured sparsity.}
We show the robustness of CrAM on semi-structured N:M sparsity patterns, which can provide practical speed-ups~\citep{NVIDIASparse}.
CrAM$^{+}$ trained using N:M sparsity preserves the dense model's accuracy (77.3\%), and can be accurately pruned one-shot (+BNT) to 2:4 (77.0\% ) or 4:8 (77.2\%) patterns. This is competitive with state-of-the-art methods for N:M sparsity~\cite{zhou2021learning}. We provide a full discussion in Appendix~\ref{app:N-M}.\looseness=-1

\begin{figure}[h]
    \centering
    \includegraphics[height=1.7in]{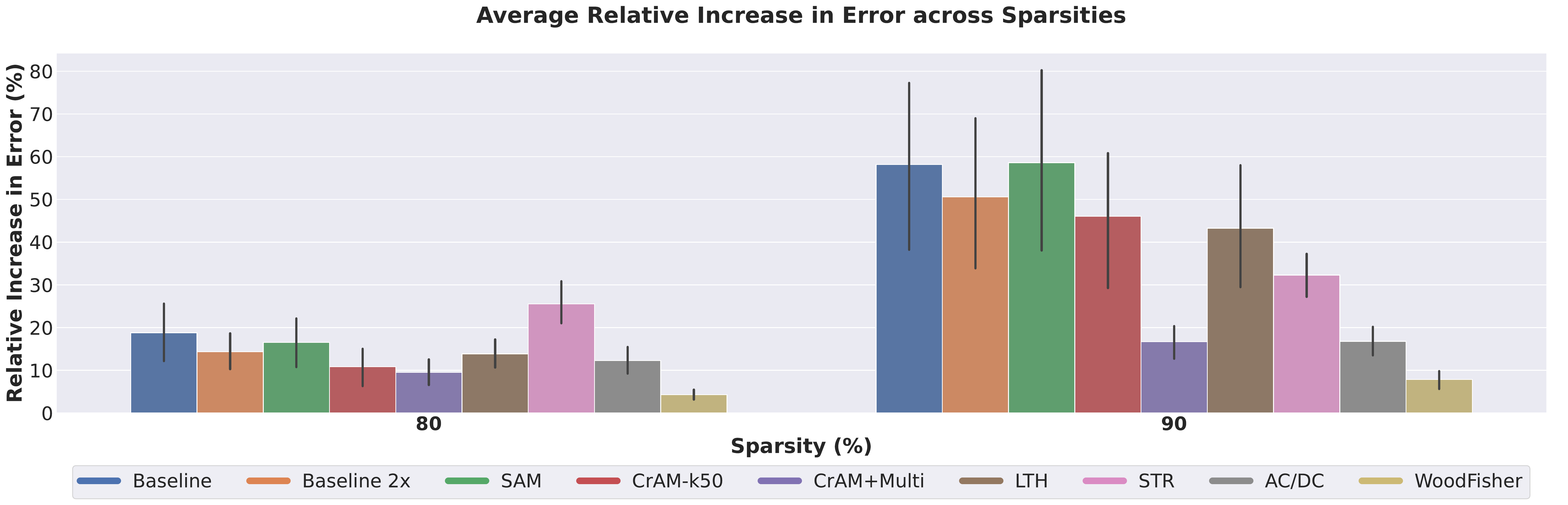}
         \captionof{figure}{Average relative increase in error, relative to dense, on 12 
         tasks, between 
         models pruned one-shot, or
         obtained from pruning methods, at different sparsities. Lower is better. All models were pretrained on ImageNet/ResNet50. For better visibility, error bars indicate 70\% confidence intervals.}
    \label{fig:transfer}
\end{figure}

\paragraph{Finetuning with CrAM.} To reduce the training cost of CrAM, we investigate whether a model can become more robust to pruning 
after short finetuning with CrAM. 
We are inspired by \cite{andriushchenko2022towards}, who showed that similar benefits to full SAM training can be obtained when SAM is used only in the final training phase. We finetune pretrained ImageNet ResNet18 and ResNet50 models, from the Torchvision library, for 10 epochs, using 
CrAM$^{+}$-k70 and CrAM$^{+}$-Multi, both with sparse gradients. 
For CrAM$^{+}$-Multi we randomly select at each step a sparsity level in the range 50\%-90\%. For comparison, we also finetuned using SGD with momentum or using SAM, under the same hyperparameters.
We report in Tables~\ref{tab:cram-finetuning-rn18} and~\ref{tab:cram-finetuning-rn50} the validation accuracy for the dense models, 
and after one-shot pruning at 50\%-80\% sparsity levels.
Finetuning with CrAM$^{+}$ preserves or outperforms the baseline accuracy, and results in good sparse models after one-shot pruning, at moderate sparsity levels (up to 70\%). Results improve with longer finetuning: after 20 epochs, the CrAM$^{+}$-Multi model can be pruned one-shot to 70\% sparsity, with $\leq$ 1\% drop in accuracy, compared to the baseline.\looseness=-1
\begin{table}[h]
\begin{minipage}[t]{.48\textwidth}
    \centering
    \scalebox{0.68}{
    \begin{tabular}{cccccc}
     \toprule
        \multirow{2}{*}{Model} & \multirow{2}{*}{Dense} & \multicolumn{4}{c}{Sparsity} \\ 
        & & 50\% & 60\%  & 70\% & 80\% \\
    \midrule
        SGD  & 70.4 & 68.9 & 67.0 & 62.3 & 50.1 \\
        SAM & 70.5 & 69.2 & 67.4 & 63.4 & 52.2 \\
        CrAM$^{+}$-k70 & 70.3 & 69.5 & 68.8 & \textbf{69.0} & 65.0 \\
        CrAM$^{+}$-Multi & 70.4 & 69.7 & 69.2 & 68.3 & 66.7 \\
        \midrule
        CrAM$^{+}$-Multi-20 & \textbf{70.6} & \textbf{69.9} & \textbf{69.6} & \textbf{69.0} & \textbf{67.6} \\
        
    \bottomrule
    \end{tabular}
    }
    \caption{(ResNet18) Accuracy after finetuning for dense models, and after one shot pruning (+BNT)}
    \label{tab:cram-finetuning-rn18}

\end{minipage}
\hfill
\begin{minipage}[t]{.48\textwidth}
    \centering
    \scalebox{0.68}{
    \begin{tabular}{cccccc}
     \toprule
        \multirow{2}{*}{Model} & \multirow{2}{*}{Dense} & \multicolumn{4}{c}{Sparsity} \\ 
        & & 50\% & 60\%  & 70\% & 80\% \\
    \midrule
        SGD & 76.8 & 75.4 & 73.6 & 69.0 & 53.1 \\
        SAM  & \textbf{76.9} & 75.8 & 74.3 & 70.5 & 57.8 \\
        CrAM$^{+}$-k70 & 76.8 & 75.9 & 75.5 & 75.4 & 72.0 \\
        CrAM$^{+}$-Multi & 76.7 & 75.9 & 75.6 & 75.0 & 73.5 \\
        \midrule
        CrAM$^{+}$-Multi-20 & 76.8 & \textbf{76.1} & \textbf{75.7} & \textbf{75.5} & \textbf{74.4} \\
        
    \bottomrule
    \end{tabular}
    }
    \caption{(ResNet50) Accuracy after finetuning for the dense models, and after one shot pruning (+BNT)}
    \label{tab:cram-finetuning-rn50}
\end{minipage}
\end{table}

\paragraph{Sparse Transfer Experiments.} 
We additionally test how well the one-shot pruned CrAM models, obtained after training on ImageNet, transfer to new tasks. The setup is very similar to 
\citet{salman2020adversarially, kornblith2019better}, where transfer is performed across 12 benchmark tasks. We use the same transfer tasks as~\citet{salman2020adversarially, iofinova2022well}, and following \citet{iofinova2022well}, we use full finetuning of the non-zero weights, with fixed masks, and reinitialize the dense classification layer.
We compare dense and one-shot pruned CrAM-k50 and CrAM$^{+}$-Multi models (before BNT), trained on ImageNet/ResNet50 to the corresponding ones obtained from SGD or SAM. In addition, we compare with standard pruning methods used in the literature, such as lottery-tickets (LTH)~\citep{chen2021lottery}, AC/DC~\citep{peste2021ac}, STR~\citep{kusupati2020soft} or WoodFisher~\citep{singh2020woodfisher}, for which we use public finetuned models on the downstream tasks~\citep{iofinova2022well}.
For each model and sparsity level, we aggregate the results over all tasks, measuring the relative increase in error of the sparse models, compared to the dense baseline, as was done in~\citet{iofinova2022well}. 
The results in Figure~\ref{fig:transfer} show that one-shot pruned CrAM models transfer well. In fact, both CrAM-k50 and CrAM$^{+}$-Multi models transfer better than LTH or STR at 80\% sparsity.
Also, CrAM$^{+}$-Multi at 90\% sparsity has a similar transfer performance to AC/DC models, and gives better results compared to the other pruning methods used for comparison (LTH or STR), with the exception of the second-order WoodFisher method, which is the best performing method across both 80\% and 90\% sparsity. Moreover, the dense CrAM$^+$-Multi model has a very similar transfer performance to the baseline, with a less than 1\% average relative increase in error, while dense CrAM-k50 or SAM, as well as the baseline trained for twice more iterations result in a minor improvement in transfer accuracy of around 2\%, compared to the baseline. Compared to the standard pruning methods used for comparison, CrAM has the added advantage that it 
produces an accurate dense model, and both 80\% and 90\% models with a single ImageNet run. \looseness=-1

\paragraph{Quantization.} 
We additionally provide evidence that CrAM can be adapted to quantization. Namely, a short finetuning using a quantization version of CrAM 
on pretrained ImageNet models can preserve their accuracy with respect to the baseline, after symmetric per-channel 4 bits quantization, while also boosting the accuracy of the dense model. 
More details can be found in Appendix~\ref{app:quant}. \looseness=-1

\subsection{ Experiments on Language Modelling}
\label{subsec:exps-bert}

In addition to image classification,
we also successfully apply CrAM for language models. 
We demonstrate that CrAM produces models that are more compressible and accurate than the ones obtained with standard optimizers like Adam~\citep{kingma2014adam} and SAM~\citep{foret2020sharpness}. 
Also, we show that CrAM models are even competitive with gradual pruning methods, which usually require a
higher computational budget to produce accurate sparse models for each sparsity target independently. \looseness=-1

\paragraph{General setup.} We focus on the standard benchmark for compression methods: the BERT-base~\citep{Devlin2019BERTPO} model on the span-based question-answering task SQuADv1.1~\citep{Rajpurkar2016SQuAD1Q}. We consider the short fine-tuning setup (1-3 epochs) of the pretrained BERT-base model on a downstream task. Following the community standards (\eg~\citet{sanh2020movement},~\citet{kurtic2022optimal}) we sparsify the weights of the encoder part, and make use of the Top-K operator at each step to impose uniform sparsity distribution over all layers. 

\paragraph{Robustness to one-shot pruning.} We fine-tune the model with Adam, SAM, and several variants of CrAM and test their robustness to one-shot pruning with the standard magnitude pruner. To identify the optimal set of hyper-parameters we run a grid search (please see Appendix~\ref{app:bert-hparams} for more details) and pick the one with the best one-shot performance at 50\% sparsity target. For a fair comparison, we allow Adam to fine-tune for twice as many epochs as SAM and CrAM. The results presented in Table~\ref{tab:cram-one-shot-squad} suggest that CrAM models are more robust to one-shot pruning while still being able to match or even outperform the dense accuracy obtained with other optimizers.

\begin{table}[h]
\begin{minipage}[t]{.38\textwidth}
    \centering
    \scalebox{0.65}{
    \begin{tabular}{cccccc}
     \toprule
        \multirow{2}{*}{Model} & \multirow{2}{*}{Dense} & \multicolumn{4}{c}{Sparsity} \\
        & & 50\% & 60\% & 70\% & 80\% \\
    \midrule
        Adam & 88.7 & 80.0 & 32.5 & \phantom{1}9.6 & \phantom{1}8.1 \\
        SAM &  88.5 & 81.0 & 33.4 & 10.1 & \phantom{1}7.3 \\
    \midrule
        CrAM$^{+}$-k50 & \textbf{88.9} & \textbf{88.3} & 84.6 & 25.3 & \phantom{1}8.3 \\
        CrAM$^{+}$-k60 & 88.7 & 88.1 & 87.8 & 75.7 & 10.2 \\
        CrAM$^{+}$-k70 & 88.8 & 87.8 & 87.0 & \textbf{86.9} & 33.9\\
        CrAM$^{+}$-k80 & 88.4 & 86.9 & 85.5 & 84.9 & \textbf{84.7} \\
        CrAM$^{+}$-Multi & 88.7 & \textbf{88.3} & \textbf{88.1} & 86.8 & 82.5\\
    \bottomrule
    \end{tabular}
    }
    \vspace{2pt}
    \caption{(SQuADv1.1/BERT-base) Validation F1 score of models after fine-tuning with the corresponding optimizer and applying one-shot magnitude pruning.}
    \label{tab:cram-one-shot-squad}
\end{minipage}
\hfill
\hspace{2pt}
\begin{minipage}[t]{.58\textwidth}
    \centering
    \scalebox{0.65}{
    \begin{tabular}{cccccc}
     \toprule
        \multirow{2}{*}{Model} & \multirow{2}{*}{Pruning} & \multicolumn{4}{c}{Sparsity} \\
        & & 50\% & 60\% & 70\% & 80\% \\
    \midrule
        $\ell_0$ regularization & gradual & 84.6 & 83.9 & 82.8 & 81.9 \\
        Magnitude & gradual & 87.0 & 86.7 & 86.5 & 84.8 \\
        Movement & gradual & 83.0 & 82.8 & 81.9 & 82.0 \\
        Soft-Movement & gradual & 85.8 & N.A. & 84.6 & N.A. \\
        PLATON & gradual & 87.2 & 86.9 & 86.7 & 86.1 \\
    \midrule
        \multirow{3}{*}{CrAM$^{+}$-Multi} & one-shot magnitude & 88.3 & 88.1 & 86.8 & 82.5 \\
        & one-shot oBERT & \textbf{88.7} & 88.1 & 87.5 & 84.9 \\
        & one-shot oBERT + fine-tune & \textbf{88.7} & \textbf{88.4} & \textbf{88.1} & \textbf{87.4} \\
    \bottomrule
    \end{tabular}
    }
    \caption{(SQuADv1.1/BERT-base) Validation F1 score of the CrAM$^{+}$-Multi model after one-shot pruning with magnitude and oBERT pruners. We additionally fine-tune the one-shot oBERT-pruned model and compare it with gradual pruning methods.} 
    \label{tab:cram-one-shot-finetune-squad}
\end{minipage}
\end{table}

\paragraph{Comparison with gradual pruning methods.} We investigate whether CrAM models can be competitive with those produced by gradual pruning methods, which progressively prune and fine-tune the model for many epochs. We adopt the CrAM$^+$-Multi model from Table~\ref{tab:cram-one-shot-squad} and prune it in one-shot with the standard magnitude pruner, but also with the state-of-the-art BERT-pruning method called oBERT~\citep{kurtic2022optimal}. Since one-shot pruning to high sparsity targets can severely impact the model's performance, we also investigate whether short fine-tuning (for at most 2 epochs) on top of it can bridge the gap towards full accuracy recovery. In Table~\ref{tab:cram-one-shot-finetune-squad} we present the results and compare against the following gradual pruning methods: $\ell_0$ regularization~\citep{louizos2017learning}, Magnitude~\citep{zhu2017prune}, Movement~\citep{sanh2020movement}, Soft-Movement~\citep{sanh2020movement} and PLATON~\citep{zhang2022platon}. For details regarding hyper-parameters, please see Appendix~\ref{app:bert-hparams}. As can be seen from the results, one-shot pruned CrAM models are competitive with gradual pruning methods, which they outperform by huge margins when additionally fine-tuned for a few epochs. It is worth emphasizing that competitive results obtained with two different one-shot pruners, magnitude and oBERT, suggest that CrAM models are indeed robust and compatible with different pruning techniques than the ones they have been trained with. We provide in Appendix~\ref{app:bert-hparams} Tables~\ref{tab:cram-one-shot-squad-speedup} and~\ref{tab:cram-one-shot-squad-adamprob} speed-ups for the sparse BERT models, and evidence that models become more robust to pruning even when CrAM is not used at every optimization step.

\subsection{Detailed Comparisons with Other Methods}
\label{subsec:exps-cifar10}

We perform a detailed comparison with other methods on CIFAR-10~\citep{cifar100}, which is a common setup for all previous methods we consider. 
Specifically, 
we compare CrAM
with state-of-the-art gradual pruning methods \citep{lin2020dynamic} on ResNet20~\citep{he2016deep}, or with similar methods that train prunable networks \citep{miao2021learning} on VGG-16~\citep{simonyan2014very} or ResNet18~\citep{he2016deep}. We discuss all  hyperparameters in  Appendix~\ref{app:img-hyperparams}, and compare against one-shot pruning from dense baselines in Appendix~\ref{app:cifar10-compare}.

\paragraph{Comparison with Gradual Methods.}  We train CrAM$^{+}$-Multi, with sparse intermediate gradients, and sparsity values chosen randomly among $\{50\%, 70\%, 80\%, 90\%, 95\%\}$. The results in Table~\ref{tab:cram-cifar10} show that the \emph{dense} CrAM$^{+}$-Multi model is highly accurate, and also very robust to one-shot pruning, even at high sparsities (\eg 90\% and 95\%). Remarkably, our results for one-shot pruning (+BNT) are usually competitive with those obtained by other methods which train sparse models separately, for each sparsity target, for example through DPF \citep{lin2020dynamic}. Notably, we substantially outperform DPF at 95\% sparsity, after BNT. 

\begin{table}[h]

    \centering
    \scalebox{0.75}{
    \begin{tabular}{cccccccc}
     \toprule
        \multirow{2}{*}{Architecture} & \multirow{2}{*}{Model} & \multirow{2}{*}{Dense} & \multicolumn{5}{c}{Sparsity} \\
       &  & & 50\%  & 70\%  & 80\%  & 90\% & 95\%
       \\
    \midrule
    \multirow{2}{*}{ResNet20}  &  
    CrAM$^{+}$-Multi  & 92.9 $\pm$ 0. & 92.8 $\pm$ 0.1 & \textbf{92.7 $\pm$ 0.2} & \textbf{92.6 $\pm$ 0.2} & \textbf{91.2 $\pm$ 0.1} & \textbf{89.2 $\pm$ 0.1} \\
    &  DPF  & N/A & N/A & $92.4\pm 0.1$ & $92.2\pm 0.2$ & 90.9 $\pm$ 0.1 & 88.0 $\pm$ 0.3 \\
    \midrule 
    \multirow{3}{*}{VGG-16} &   CrAM$^{+}$-k95  & \textbf{94.2 $\pm$ 0.1} & \textbf{94.2 $\pm$ 0.1} & \textbf{94.2 $\pm$ 0.1} & \textbf{94.1 $\pm$ 0.1} & \textbf{94.0 $\pm$ 0.1} & \textbf{94.1 $\pm$ 0.1} \\
    & SFW & N/A & 93.1 & 93.1 & 93.1 & 93.1 & 92.0 \\
    &   DPF   & N/A & N/A & N/A & N/A & N/A &  93.9 $\pm$ 0.2 \\ 
    \bottomrule
    \end{tabular}
    }
    \caption{(CIFAR-10) Test accuracy (\%) for CrAM after one shot pruning (+BNT). CrAM$^{+}$-Multi can outperform gradual pruning method DPF, up to 95\% sparsity.
    DPF requires retraining for each target sparsity. CrAM$^{+}$ outperforms similar method SFW~\citep{miao2021learning}.
    }
    \label{tab:cram-cifar10}
\end{table}
\paragraph{Comparison with One-shot Methods.}
We compare against other methods for training prunable models, such as \citet{miao2021learning, zimmer2022compression}.
Both are based on Stochastic Frank-Wolfe (SFW) \citep{reddi2016stochastic}, to encourage the parameters to lie in the convex hull spanned by sparse vectors, with directions given by the gradients. We compare CrAM against SFW for CIFAR-10, on ResNet18~\citep{he2016deep} and VGG-16
models, which is the same setup used in \citet{miao2021learning} and \citet{zimmer2022compression}. 
We train CrAM$^{+}$-k95 with sparse intermediate gradients, for 180 epochs (same as \citet{miao2021learning}), using SGD with momentum and weight decay, and a cosine learning rate schedule. We use BNT after pruning. On both ResNet18 and VGG-16, we obtain dense models that preserve the baseline accuracy: 94.2\% (CrAM$^{+}$-k95) vs. 93.9\% (dense) on VGG-16 and 95.7\% (CrAM$^{+}$-k95) vs. 95.4\% (dense) on ResNet18. Furthermore, on ResNet18 we maintain the model accuracy after pruning one-shot at 96\% sparsity (95.4\%, after BNT) and have a 1.3\% drop at 98\% sparsity (94.4\% Top-1); in comparison, \citet{miao2021learning} and \citet{zimmer2022compression} obtain $\leq$ 93\% accuracy at 95\% sparsity. We show a numerical comparison for VGG-16 in Table~\ref{tab:cram-cifar10}: CrAM$^{+}$-k95 preserves model accuracy even at 95\% sparsity, which is competitive with the DPF gradual method, while SFW produces models that have lower accuracy even at higher density. CrAM has higher training costs than SFW, but requires much less hyper-parameter tuning, and leads to higher accuracies. 
Similar to~\citet{zimmer2022compression}, we use BNT, but the additional cost of BNT is minimal; even \emph{without BNT}, we preserve accuracy at up to 80\% sparsity (Appendix~\ref{app:cifar10-compare}), leading to better results than~\cite{miao2021learning, zimmer2022compression}.

\section{Conclusions and Future Work}
We proposed a new method for training neural networks, CrAM, which results in models that are both highly accurate, and easily-compressible. Our extensive experimental analysis on large scale image classification (ImageNet/ResNets) and language modelling (SQuADv1.1/BERT-base) shows that CrAM models can be pruned one-shot at a wide range of sparsity levels, while resulting in sparse models that are competitive with existing gradual pruning methods. 
Furthermore, we show that one-shot pruned CrAM models can transfer better to downstream tasks,
compared to some of the existing pruning methods. While we focus on pruning as the main compression operator, we also give encouraging evidence that the CrAM update can be successfully adapted to other compression projections, such as quantization, and we plan to investigate this more closely in future work. We would like to explore whether prolonged CrAM-training would further enhance both the performance of the resulting dense model, as well as its robustness to one-shot compression. Finally, we are interested in leveraging in the CrAM update different methods developed for reducing the computational complexity of SAM, in order to improve the efficiency of our method. 

\section*{Acknowledgements}
AP, EK, DA received funding from the European Research Council (ERC) under the European Union’s Horizon 2020 research and innovation programme (grant agreement No 805223 ScaleML). AV acknowledges the support of the French Agence Nationale de la Recherche (ANR), under grant ANR-21-CE48-0016 (project COMCOPT). We further acknowledge the support from the Scientific Service Units (SSU) of ISTA through resources provided by Scientific Computing (SciComp)

\section*{Ethics and Reproducibility Statements}

Our work is technical in nature, considering existing models and datasets, and thus does not pose any direct ethical concerns. With regards to reproducibility, we release an implementation of our optimizer and an example experimental harness. 

\bibliography{references}
\bibliographystyle{iclr2023_conference}

\newpage 

\appendix

\section*{Appendix}

\section{Theoretical Support for the CrAM Update}
\label{sec:theory-appendix}
\global\long\def\pdom{R_{C}}%
\global\long\def\pdomopt{R_{C}^{*}}%
\global\long\def\dir{ \boldsymbol{\mathit{d}}  }%
\global\long\def\h{ \boldsymbol{\mathit{h}}  }%
\global\long\def\funrob{ \widetilde{L} }%

In this section, we attempt to formally derive a generic training method 
whose purpose is to provide accurate models,
which perform well even
after being compressed. 
To understand why we can hope to achieve such guarantees, we first take a brief detour to the area of robust optimization.

\subsection{Robust Optimization}
\label{app:robust-opt}
Generally, practical training methods are based on versions of stochastic gradient descent attempting to minimize a loss function $\fun(\w)$. However, $\w$ might turn out to be a bad solution as the landscape of $\fun$ in its neighborhood could contain large changes in value. To address this issue, one may attempt to flatten $\fun$ such that it is less sensitive to sharp drops in value localized around a very small region. To this extent, a standard robustification can be defined by 
\begin{equation}\label{eq:funrob}
\funrob(\w) = \max_{\| \perturb \| \leq \rho} \fun(\w + \perturb)\,,
\end{equation}
which makes the value of $\funrob(\w)$ take that of the largest value of $\fun$ given by perturbation of $\w$ within a ball of radius $\rho$. While this robustified function may seem well suited to generic training tasks,  it is a priori unclear that it is amenable to optimization.

However, under certain conditions, we can efficiently optimize $\funrob$ by using a classical theorem in robust optimization due to Danskin~\citep{danskin2012theory}.
\begin{thm}\label{thm:danskin}
(Danskin) Let $\mathcal{C}\subseteq\mathbb{R}^{m}$ be a compact set, let a function 
$\phi:\mathbb{R}^{n}\times\mathcal{C}\rightarrow\mathbb{R}$ such that $\phi(\cdot, \y)$ is continuously
differentiable for every fixed $\y\in\mathcal{C}$ and $\nabla_{\x} \phi(\x,\y)$ is continuous on $\mathbb{R}^n \times \mathcal{C}$, and let $\psi:\mathbb{R}^{n}\rightarrow\mathbb{R}$
be defined as 
\[
\psi\left(x\right)=\max_{\y\in\mathcal{C}}\phi\left(\x,\y\right)\ .
\]
Then $\psi$ is locally Lipschitz continuous, directionally differentiable, and its directional derivatives satisfy
\[
d\psi\left(\x; \dir\right)=\max_{\y\in\mathcal{C}^{*}}\dir^{\top}\nabla_{\x}\phi\left(\x,\y\right)\ .
\]
where $\mathcal{C}^{*}\left(\x\right)$ is the set of maximizers
\[
\mathcal{C}^{*}(\x)=\left\{ \y^{*}:\phi\left(\x,\y^{*}\right)=\max_{\y\in\mathcal{C}}\phi\left(\x,\y\right)\right\} \ .
\]
In particular, if for some $\x\in \mathbb{R}^n$ the set $\mathcal{C}^*(\x) = \{\y^*_{\x}\}$  is a singleton, then $\psi$ is differentiable at $\x$ and 
\[
\nabla \psi(\x) = \nabla_{\x} \phi(\x,\y^*_{\x})\,.
\]
\end{thm}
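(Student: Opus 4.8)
The plan is to establish the four assertions in order, with compactness of $\mathcal{C}$ and joint continuity of $\nabla_{\x}\phi$ serving as the two workhorses. First I would prove local Lipschitz continuity. Fixing $\x_0$ and a closed ball $\bar B(\x_0, r)$, the product $\bar B(\x_0, r) \times \mathcal{C}$ is compact, so $\nabla_{\x}\phi$ is bounded there by some constant $L$; the fundamental theorem of calculus applied to $\phi(\cdot,\y)$ then yields $|\phi(\x,\y) - \phi(\x',\y)| \leq L\|\x - \x'\|$ uniformly in $\y$ for $\x,\x' \in \bar B(\x_0,r)$. Since $\psi(\x) - \psi(\x') \leq \sup_{\y}\bigl(\phi(\x,\y) - \phi(\x',\y)\bigr)$ and symmetrically, this bound passes to $\psi$, proving it is $L$-Lipschitz on the ball. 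Along the way I would record that $\mathcal{C}^*(\x)$ is nonempty and compact: the max is attained because $\phi(\x,\cdot)$ is continuous on the compact set $\mathcal{C}$, and $\mathcal{C}^*(\x)$ is closed as a level set of a continuous function.

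For the directional derivative, fix $\x$ and $\dir$ and study the quotient $q(t) = t^{-1}\bigl(\psi(\x + t\dir) - \psi(\x)\bigr)$ as $t \to 0^+$. The \emph{lower} bound is immediate: for any fixed $\y^* \in \mathcal{C}^*(\x)$ one has $\psi(\x + t\dir) \geq \phi(\x + t\dir, \y^*)$ and $\psi(\x) = \phi(\x, \y^*)$, so $q(t) \geq t^{-1}\bigl(\phi(\x + t\dir,\y^*) - \phi(\x,\y^*)\bigr) \to \dir^\top \nabla_{\x}\phi(\x,\y^*)$; taking the $\liminf$ and then the supremum over $\y^* \in \mathcal{C}^*(\x)$ gives $\liminf_{t\to 0^+} q(t) \geq \max_{\y^* \in \mathcal{C}^*(\x)} \dir^\top \nabla_{\x}\phi(\x,\y^*)$. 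The \emph{upper} bound is the crux. I would take $t_k \to 0^+$ realizing the $\limsup$ of $q$, pick maximizers $\y_k \in \mathcal{C}^*(\x + t_k\dir)$, and use compactness of $\mathcal{C}$ to extract $\y_k \to \bar\y$. Continuity of $\psi$ together with continuity of $\phi$ forces $\phi(\x,\bar\y) = \psi(\x)$, i.e.\ $\bar\y \in \mathcal{C}^*(\x)$. Bounding $\psi(\x + t_k\dir) - \psi(\x) \leq \phi(\x + t_k\dir,\y_k) - \phi(\x,\y_k)$ and applying the mean value theorem gives $q(t_k) = \dir^\top \nabla_{\x}\phi(\xi_k,\y_k)$ for some $\xi_k$ on the segment; joint continuity of $\nabla_{\x}\phi$ sends this to $\dir^\top \nabla_{\x}\phi(\x,\bar\y) \leq \max_{\y^*} \dir^\top \nabla_{\x}\phi(\x,\y^*)$. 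Combining the two bounds shows the limit exists and equals the claimed maximum, establishing directional differentiability.

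Finally, for the singleton case $\mathcal{C}^*(\x) = \{\y^*_{\x}\}$, the formula collapses to $d\psi(\x;\dir) = \dir^\top \nabla_{\x}\phi(\x,\y^*_{\x})$, which is \emph{linear} in $\dir$. Linearity identifies the directional derivative with a Gâteaux derivative having candidate gradient $\nabla_{\x}\phi(\x,\y^*_{\x})$, but linearity alone does not yield Fréchet differentiability. Here I would invoke the local Lipschitz property from the first step: the difference quotients $\dir \mapsto t^{-1}\bigl(\psi(\x + t\dir) - \psi(\x)\bigr)$ are equi-Lipschitz in $\dir$ with constant $L$, so their pointwise convergence on the compact unit sphere upgrades to uniform convergence, which is exactly Fréchet differentiability with $\nabla\psi(\x) = \nabla_{\x}\phi(\x,\y^*_{\x})$.

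The main obstacle is the upper bound in the directional-derivative step, the one place that genuinely needs the full strength of the hypotheses: one must simultaneously control the \emph{moving} maximizer $\y_k$ (via sequential compactness and upper semicontinuity of the argmax correspondence) and the base point $\xi_k$ (via the mean value theorem), closing the argument with joint continuity of $\nabla_{\x}\phi$. A secondary subtlety worth flagging is the Gâteaux-to-Fréchet upgrade in the singleton case, which silently relies on the local Lipschitz bound rather than on linearity of the directional derivative alone.
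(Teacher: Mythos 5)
The paper does not prove this statement: it is imported verbatim as a classical result and attributed to Danskin's book on min--max theory, with only the downstream Corollary (that $-\nabla L(\w+\overline{\perturb})$ is a descent direction for the robustified loss) actually being proved in the text. Your proposal therefore cannot be compared against an in-paper argument, but as a self-contained reconstruction it is the standard proof and is essentially correct: the compactness-plus-boundedness-of-$\nabla_{\x}\phi$ argument for local Lipschitzness, the two-sided squeeze on the difference quotient (lower bound from a fixed maximizer, upper bound from a moving maximizer $\y_k$ combined with upper semicontinuity of the argmax correspondence and the mean value theorem), and the equi-Lipschitz upgrade from G\^ateaux to Fr\'echet differentiability in the singleton case are exactly the right ingredients, and you correctly flag the two genuinely delicate points. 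Two small remarks. First, in the upper-bound step the mean value theorem gives $q(t_k) \leq \dir^{\top}\nabla_{\x}\phi(\xi_k,\y_k)$ rather than equality, since $\psi(\x)\geq \phi(\x,\y_k)$ only bounds the quotient from above; the inequality is all you need, so this is a typo rather than a gap. Second, your argument silently uses continuity of $\phi(\x,\cdot)$ in $\y$ (to guarantee the max is attained, that $\mathcal{C}^*(\x)$ is closed, and that $\phi(\x,\y_k)\to\phi(\x,\bar\y)$); this is not literally among the stated hypotheses, which only posit continuity of $\phi(\cdot,\y)$ in $\x$ and joint continuity of $\nabla_{\x}\phi$, but it is needed for the theorem to make sense at all and is part of the standard formulation, so it is worth stating explicitly rather than assuming tacitly.
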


This shows that, under certain assumptions, we can obtain directional derivatives for $\funrob\left(\w\right)$
by simply maximizing $\fun(\w+\perturb)$ over $\perturb \in B_2(\rho)$.

\begin{cor}\label{cor:maincor}
Let $\funrob$ be defined as in Equation (\ref{eq:funrob}), and define
\[
\mathcal{C}^*(\w) =  \left\{\perturb : \|\perturb\| \leq \rho, \fun(\w + \perturb) = \max_{\|\perturb^*\| \leq \rho} \fun(\w+\perturb^*)\right\}\,,
\]
and let $\overline{\perturb} \in \mathcal{C}^*(\w)$. Provided that $\fun(\w)$ is continuously differentiable, and $\w$ is not an articulation point for $\funrob$,
$-\nabla \fun(\w + \overline{\perturb})$ is a descent direction for $\funrob(\w)$ as long as it is nonzero.

\end{cor}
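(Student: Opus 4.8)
The plan is to cast $\funrob$ directly into the min-max form handled by Danskin's theorem (Theorem~\ref{thm:danskin}) and then simply read off a descent direction from its directional-derivative formula. Concretely, I would set $\phi(\w, \perturb) = \fun(\w + \perturb)$ with feasible set $\mathcal{C} = \{\perturb : \|\perturb\| \leq \rho\}$, the closed ball of radius $\rho$, which is compact. With this identification $\psi = \funrob$ and the maximizer set $\mathcal{C}^*(\w)$ produced by the theorem coincides exactly with the set defined in the statement.

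First I would verify the hypotheses of Theorem~\ref{thm:danskin}. Compactness of $\mathcal{C}$ is immediate. Since $\fun$ is continuously differentiable, for each fixed $\perturb$ the map $\w \mapsto \fun(\w + \perturb)$ is continuously differentiable with $\nabla_{\w} \phi(\w, \perturb) = \nabla \fun(\w + \perturb)$, and this gradient is jointly continuous in $(\w, \perturb)$ because $\nabla \fun$ is continuous. Hence Danskin applies and yields that $\funrob$ is directionally differentiable with
\[
d\funrob(\w; \dir) = \max_{\perturb \in \mathcal{C}^*(\w)} \dir^{\top} \nabla \fun(\w + \perturb)\,.
\]

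The key step is to exploit the assumption that $\w$ is not an articulation point of $\funrob$. I would interpret this condition as asserting that $\mathcal{C}^*(\w)$ is a singleton, i.e. the inner maximization has a unique maximizer, which is precisely the regime in which Danskin guarantees genuine differentiability rather than mere directional differentiability. Under this assumption $\mathcal{C}^*(\w) = \{\overline{\perturb}\}$, and the last part of Theorem~\ref{thm:danskin} gives $\nabla \funrob(\w) = \nabla \fun(\w + \overline{\perturb})$. Substituting the candidate direction $\dir = -\nabla \fun(\w + \overline{\perturb})$ collapses the maximum to a single term, yielding
\[
d\funrob(\w; -\nabla \fun(\w + \overline{\perturb})) = -\|\nabla \fun(\w + \overline{\perturb})\|^2\,,
\]
which is strictly negative whenever $\nabla \fun(\w + \overline{\perturb}) \neq 0$. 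This establishes that $-\nabla \fun(\w + \overline{\perturb})$ is a descent direction as long as it is nonzero, completing the argument.

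The main obstacle is not the computation but pinning down the meaning of ``articulation point'' and seeing why it is indispensable. If $\mathcal{C}^*(\w)$ contained several maximizers, the directional derivative would be a maximum over all of them; while the term corresponding to $\overline{\perturb}$ still contributes $-\|\nabla \fun(\w + \overline{\perturb})\|^2 < 0$, the gradients $\nabla \fun(\w + \perturb)$ at the \emph{other} maximizers could align positively with $-\nabla \fun(\w + \overline{\perturb})$ and push the maximum above zero, destroying the descent property. Thus the non-articulation (uniqueness) hypothesis is exactly what rules out this failure mode and makes the gradient evaluated at a single, arbitrarily chosen maximizer a valid descent direction.
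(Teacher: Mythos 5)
Your proposal is correct and follows essentially the same route as the paper: both apply Danskin's theorem to $\phi(\w,\perturb)=\fun(\w+\perturb)$ over the compact ball $B_2(\rho)$ and read off the descent property from the directional-derivative formula. The one substantive difference is how the ``not an articulation point'' hypothesis is deployed. You read it as forcing $\mathcal{C}^*(\w)$ to be a singleton, which via the last clause of Danskin gives full differentiability and $\nabla\funrob(\w)=\nabla\fun(\w+\overline{\perturb})$. The paper uses a weaker reading: it keeps the possibly non-singleton maximizer set, lower-bounds $d\funrob(\w;\h)=\sup_{\perturb\in\mathcal{C}^*(\w)}\h^\top\nabla\fun(\w+\perturb)\geq \h^\top\h$ with $\h=\nabla\fun(\w+\overline{\perturb})$, and invokes the non-articulation hypothesis only to assert oddness of the directional derivative, $d\funrob(\w;-\h)=-d\funrob(\w;\h)\leq -\|\h\|^2$. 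Consequently your closing claim that uniqueness of the maximizer is \emph{indispensable} is slightly too strong: the failure mode you describe (another maximizer's gradient anti-aligning with $\h$) is already excluded by the sign-flip property alone, without requiring $\mathcal{C}^*(\w)$ to be a singleton, and this matters because the corollary is stated for an \emph{arbitrary} $\overline{\perturb}\in\mathcal{C}^*(\w)$. Since the paper never formally defines ``articulation point,'' your stronger reading is defensible and yields a valid proof, just under a more restrictive hypothesis than the paper's argument actually needs.
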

\begin{proof}
Let $\h = \nabla \fun(\w + \overline{\perturb})$.
We apply Danskin's theorem for $\phi(\w,\perturb) = \fun(\w+\perturb)$ and $\mathcal{C} = B_2(\rho)$. This shows that 
\[
d\funrob(\w;\h) =  \sup_{\perturb \in \mathcal{C}^*(\w)} \h^\top \nabla \fun(\w+\perturb)
\geq 
 \h^\top \nabla \fun(\w+\overline{\perturb})  = \h^\top \h \geq 0\,.
 \]
 Provided that $\w$ is not an articulation point for $\funrob$, we also have that $d\funrob(\w;-\h) = -d\funrob(\w;\h) \leq 0$, which concludes the proof.
 \end{proof}

\subsubsection{From Robust Optimization to SAM}
Per Corollary~\ref{cor:maincor}, to obtain a descent direction it suffices to maximize $\fun(\w+\perturb)$ over  the set of perturbations satifying $\|\perturb\| \leq \rho$. In general, even when the underlying function $\fun$ is convex, this may be a difficult problem. Instead, one may simply attempt to obtain a good local maximizer of $\fun$ in a bounded region around $\w$. The simplest possible way to do so is by performing a step of \textit{gradient ascent}, which can be regarded as a proxy for the maximization subproblem. 
Using this step, we immediately obtain the iteration:
\begin{equation}
\wextrap_t = \w_t + \frac{\rho}{\| \nabla \fun(\w_t) \|} \nabla \fun(\w_t)\,, \quad \w_{t+1} = \w_t - \eta  \nabla \fun(\wextrap_t)\,,
\end{equation}
which recovers the extrapolated SAM gradient step from~\citet{foret2020sharpness}. 

There is exhaustive research that has previously been done on robust optimization methods. For a comprehensive reference, we point the reader to Teo's PhD thesis~\citep{teo2007nonconvex}.

\subsection{Robust Optimization for Compressible Models}
With the robust optimization framework in mind, we are ready to attempt implementing a similar scheme which exhibits robustness to compression.

To motivate the method, let us consider the post-training compression. After training the model to weights $\w_{T}$ we  apply a one-shot compression method $C$ over some perturbation $\w_T + \perturb$ of the weights.
This captures several iterative methods for compression, such as iterative pruning, where changes in weights are alternated with one-shot pruning methods.

If our goal is to make the loss after compression robust within a small neighborhood of perturbations $\perturb$, we can establish as a formal objective to minimize the robustified loss

\begin{equation}
\cramloss\left(\w\right):=\max_{\perturb : \|\perturb\| \leq \rho}\fun\left(C\left(\w+\perturb\right)\right)\,,
\end{equation}
for some magnitude $\rho$ of allowed perturbations. In our case we will focus on the case where 
these are bounded in $\ell_2$ norm, but this can be easily extended to other choices of the domain.
Just as before, we can now attempt to minimize $\cramloss$, or find a near-stationary point,
by gradient descent. Using the robust optimization framework we may attempt to optimize it using Corollary~\ref{cor:maincor} after replacing $\fun(\cdot)$ with $\fun(C(\cdot))$.

Naturally, this poses some obstacles in our case. The main one is the fact that it is not true that $\fun(C(\w))$ will generally be continuously differentiable, so the conditions required to obtain descent directions via an inner maximization loop are not satisfied. However, we can show that under certain conditions, continuous differentiability fails only at a set of points of measure $0$. 

\begin{defn}\label{def:projop}
Let $S$ be a countable set, let $\{P_i\}_{i\in S}$ be a covering of $\mathbb{R}^n$ with convex sets, and let $S(\x)$ denote the family of indices from $S$ for which $\x \in P_i$. Let a family of projection operators $\{\Pi_i\}_{i\in S}$, such that for any $\x$ the projections $\{ \Pi_i(\x) \}_{i \in S(\x)}$ all map to the same point.
We call a {projective compression operator} with respect to $\{\Pi_i\}_{i\in S}$  a mapping  $C : \mathbb{R}^n \rightarrow \mathbb{R}^n$ such that
\[
C(\x) = \Pi_i(\x)\,,\quad \textnormal{ for any } i\in S(\x)\,.
\]
\end{defn}

For example, in the case of the \textit{Top-K} compression operator, we can define a covering of $\mathbb{R}^n$ with sets $P$ such that all elements $x\in P$ share the indices $A\subseteq[n]$, $|A|=k$ for the Top-K elements in absolute value (with ties broken lexicographically), and furthermore, all elements from $P$ preserve the signs across $A$. It is clear that any $x\in \mathbb{R}^n$ belongs in some such set $P$, and since there are a finite number of subsets of size $k$ and of possible signs for the components in $[n]$, we have a finite covering. Assume, without loss of generality, that a set $P$ from the covering consists of elements for which the first $k$ components are the highest in absolute value, and the signs across these components are shared across all $x\in P$. Then, for any $\x, \y \in P$, $\lambda \in (0,1)$ and $i\leq k$, we have that $|\lambda \x_i + (1-\lambda) \y_i| = \lambda |\x_i| + (1-\lambda) |\y_i|$ (since $\x_i$ and $\y_i$ share the same sign). Using that $ \lambda |\x_i| + (1-\lambda) |\y_i| \geq \lambda |\x_j| + (1-\lambda) |\y_j|$, for any $k<j<n$, together with the triangle inequality, we obtain that $|\lambda \x_i + (1-\lambda) \y_i| \geq |\lambda \x_j + (1-\lambda) \y_j|$, for any $i\leq k$ and $j>k$. Therefore, any $P$ satisfying the conditions described above is a convex set. We can further define a projection for each subset $A$ of coordinates of cardinality $k$. Then, it is clear that for any given vector $\x$, the set $A \in S(\x)$ iff the  largest $k$ coordinates of $\x$ in absolute value (with ties broken lexicographically) are supported in $A$. Therefore, we can conclude that Top-K is a projective compression operator. 

\begin{lem}[Continuously differentiable functions induce few singularities after compression]
Let $\fun:\mathbb{R}^n \rightarrow \mathbb{R}$ be a continuously differentiable function, and let $C$ be a projective compression operator. Then the function $g(\x) := \fun(C(\x))$ is continuously differentiable everywhere except at a set of points of measure $0$. Furthermore, so is the robustified function $\cramloss(\x) := \max_{\|\perturb\|\leq \rho} \fun(C(\x + \perturb))$.
\label{lem:cram-well-behaved}
\end{lem}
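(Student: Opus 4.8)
The plan is to treat the two claims separately, establishing the statement for $g = \fun(C(\cdot))$ by a purely local argument and then bootstrapping to $\cramloss$ via the envelope viewpoint of Danskin's theorem (Theorem~\ref{thm:danskin}). First I would handle $g$. If $\x$ lies in the topological interior of some cell $P_i$ from the convex covering of Definition~\ref{def:projop}, there is an open ball $B(\x,\varepsilon)\subseteq P_i$ on which $i\in S(\y)$ for every $\y$, so $C$ coincides with the single projection $\Pi_i$ there; assuming each $\Pi_i$ is continuously differentiable (true for the coordinate projections realizing Top-$K$, and for the locally constant maps realizing quantization), the composition $g=\fun\circ\Pi_i$ is $C^1$ on $B(\x,\varepsilon)$. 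Conversely, any $\x\notin\bigcup_i\partial P_i$ that lies in some $P_{i_0}$ (by covering) must lie in $\operatorname{int}(P_{i_0})$, so the set where $g$ can fail to be $C^1$ is contained in $\bigcup_{i\in S}\partial P_i$. I would then close this part with two standard facts: the topological boundary of a convex set in $\mathbb{R}^n$ is Lebesgue-null (a lower-dimensional cell sits inside a hyperplane, a full-dimensional one has null boundary), and a countable union of null sets is null; since $S$ is countable, $\bigcup_i\partial P_i$ is null.

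For $\cramloss$ I would substitute $\z=\x+\perturb$ to write $\cramloss(\x)=\max_{\z\in \bar B(\x,\rho)} g(\z)$, the maximum of $g$ over a ball of radius $\rho$ that translates with $\x$, and apply Danskin with $\phi(\x,\perturb)=g(\x+\perturb)$. On the set of $\x$ for which the inner maximizer $\perturb^\ast(\x)$ is \emph{unique} and for which $g$ is $C^1$ near the maximizing point $\x+\perturb^\ast(\x)$, the singleton form of Danskin yields differentiability of $\cramloss$ with $\nabla\cramloss(\x)=\nabla g(\x+\perturb^\ast(\x))$, and continuity of this gradient follows from continuity of $\nabla g$ on the good region of the first part together with continuity of the argmax near such $\x$. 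Hence the locus where $\cramloss$ fails to be $C^1$ is contained in $\{\x:\perturb^\ast(\x)\text{ is non-unique}\}\cup\{\x:\x+\perturb^\ast(\x)\in\bigcup_i\partial P_i\}$.

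The main obstacle is showing this last set is Lebesgue-null. The second piece is not automatically null—the maximizer can be pinned to the singular set of $g$ on a set of positive measure (for instance when $g$ has a ridge of maxima), although on such configurations $\cramloss$ is locally constant and hence harmless, so the object that genuinely needs control is non-uniqueness of the maximizer. Here I would exploit that $\x\mapsto\max_{\bar B(\x,\rho)}g$ is a ``rolling-ball'' maximum, structurally identical to a distance/medial-axis construction: the set of centers whose closed ball attains its $g$-maximum at two or more points is the analogue of the medial axis of a closed set, which is Lebesgue-null under mild regularity, by the same mechanism that makes distance functions differentiable almost everywhere with the envelope formula. I would make this rigorous either through a Sard/transversality argument on the smooth pieces, writing $\cramloss=\max_{i}\max_{\z\in \bar B(\x,\rho)\cap P_i}g_i(\z)$ as a countable max of maxima of the globally $C^1$ functions $g_i=\fun\circ\Pi_i$ over moving convex bodies, or, most cleanly, by invoking definability in an o-minimal structure when $\fun$ and the $\Pi_i$ are subanalytic, which confines the non-differentiability locus of $\cramloss$ to a finite union of lower-dimensional manifolds and hence to a null set. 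Either route reduces the claim to a countable union of null sets and completes the proof.
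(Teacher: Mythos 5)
Your treatment of $g=\fun(C(\cdot))$ coincides with the paper's: on the interior of each cell $P_i$ the operator $C$ agrees with the single projection $\Pi_i$, so $g=\fun\circ\Pi_i$ is $C^1$ there, and the exceptional set is contained in $\bigcup_{i}\partial P_i$, a countable union of boundaries of convex sets, hence Lebesgue-null. This half is correct and matches the paper.

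For the robustified function the two arguments diverge, and yours has a genuine gap. The paper argues via directional derivatives: for a fixed direction $\Delta\w$ it sets $M(\w)=\Delta\w^{\top}\nabla g(\w)$ and claims that the directional derivative of $\cramloss$ can only be discontinuous where the robustification $\widetilde{M}(\w)=\max_{\|\perturb\|\le\rho}M(\w+\perturb)$ is, and that almost-everywhere continuity of $M$ passes to $\widetilde{M}$. You instead invoke the singleton case of Danskin and reduce the claim to showing that the set of centers $\x$ whose ball $\bar B(\x,\rho)$ has a non-unique maximizer of $g$, or a maximizer sitting on the singular set of $g$, is null. The reduction is sensible, and you are right that this is the crux and that non-uniqueness alone is not automatically fatal; but you never prove the measure-zero claim. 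The medial-axis analogy controls non-uniqueness for distance functions because of their Lipschitz/semiconcave structure, which $g=\fun\circ C$ does not possess: $C$ (e.g.\ Top-K) is discontinuous across cell boundaries, so $g$ and the rolling-ball maximum need not be Lipschitz and Rademacher-type reasoning does not apply directly. The Sard/transversality route is announced but not executed, and the o-minimal route imposes a subanalyticity hypothesis on $\fun$ that the lemma does not assume. As written, the second half is a plan rather than a proof. For what it is worth, the paper's own argument for this half is also quite terse -- the transfer of a.e.\ continuity from $M$ to $\widetilde{M}$, and the identification of the directional derivative of $\cramloss$ with a max over the whole ball rather than over the argmax set, are both asserted without justification -- so you have correctly located the delicate step; you just have not closed it.
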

\begin{proof}
First we note that the boundary of any convex set has measure zero by standard arguments in convex analysis~\citep{lang1986note}. Since a countable union of sets of measure zero has measure zero, it follows that the union of the boundaries of $P_i$'s has measure zero. Now since $\fun$  is continuously differentiable, within any set $P_i$, we have that $g(\x) = \fun(\Pi_i(\x))$, and hence it remains continuously differentiable. Therefore the only region for which we can not argue about continuous differentiability is the complement of the union of interiors of $P_i$'s, $\left({\cup_i \textnormal{int} P_i}\right)^c \subseteq \cup_i \partial P_i$ 
which is a set of measure zero. Since $g$ is well-behaved almost everywhere, all that remains to argue is that this is the same case with $\cramloss$.

For any fixed direction $\Delta \w$, we define the mapping
\[
M(\w ) = \Delta\w^\top \nabla g(\w)\,,
\]
and its robustification
\[
\widetilde{M}(\w )
= \max_{\|\perturb\|\leq\rho} M(\w + \perturb) =\max_{\|\perturb\|\leq\rho} \Delta\w^\top \nabla g(\w+\perturb) \,.
\]
Hence the directional derivative  \wrt $\Delta\w$ of $\cramloss(\w)$ is discontinuous only when $\widetilde{M}(\w)$ is discontinuous. Finally, we note that this almost never happens, as $M$ is continuous almost everywhere, and thus so must be $\widetilde{M}$. Thus, all directional derivatives are continuous except at a set of measure $0$, which concludes the proof.
\end{proof}

Finally, just like in the previous case, maximizing $\fun(C(\w + \perturb))$ over small perturbations is generally intractable. So we instead consider obtaining a good enough maximizer via a standard iterative method which has shown good performance in practice. More precisely we consider the projected gradient ascent method, which provides strong theoretical guarantees, even when the projection is performed onto non-convex domains~\citep{peste2021ac}. In the case where the compression operator represents magnitude pruning, this corresponds to the iterative hard thresholding (IHT) method, frequently employed in the sparse recovery literature. 

To reach a good iterate within this specific domain we instead perform a single step of (projected) gradient ascent, which matches the IHT iteration:
\begin{equation}
\wextrap_t = C\left(\w_t + \rho \cdot \nabla \fun(\w_t)\right)\,.
\end{equation}

We have therefore obtained a re-derivation of the CrAM update in Equation~\ref{eq:cram}. We note that in our experiments we use a fixed step size $\rho$ for the interpolation step, instead of normalizing the gradients. Similar to previous work~\citep{andriushchenko2022towards}, we observed that normalizing the gradients did not result in significant difference in the results, and for simplicity we have omitted the normalization step in our experiments.

Furthermore, we note that the analysis above holds also for the CrAM$^+$. Namely, we can rewrite the CrAM$^+$ loss as $\cramplusloss = \max_{\|\perturb\| \leq \rho} (L(C(\w+\perturb)) + L(\w))$, and use Lemma~\ref{lem:cram-well-behaved} to obtain that $\cramplusloss$ is continuously differentiable almost everywhere, and so are its directional derivatives. 

\section{Image Classification Hyperparameters} 
\label{app:img-hyperparams}

\paragraph{Hyperparameters for CIFAR-10 experiments.} We train ResNet20 models for 200 epochs, using SGD with momentum and weight decay, and a cosine learning rate scheduler, with a learning rate warm-up of 5 epochs. Additionally, we trained the baseline model for twice as many epochs, to match the number of backpropagation steps of SAM and CrAM. 
To determine the value of the hyperparameter $\rho$, we performed a grid search over values in the range $0.01-0.2$, using a $90\%-10\%$ train-validation split and found 0.1 and 0.15 to be the best values for SAM and CrAM$^+$-Multi, respectively (achieving highest validation accuracy).
After finding the best value of $\rho$, we retrained using the entire training set, and starting from 3 different random seeds, and report the final accuracy after 200 epochs of training. We follow a very similar training recipe and hyperparameter search for ResNet18 and VGG experiments, but train instead of 180 epochs. 

\paragraph{Hyperparameters for ImageNet experiments.} For our ImageNet experiments, we use standard data augmentation, and we train the models using SGD for 100 epochs, with batch size 512, momentum 0.9, and weight decay 0.0001. The learning rate is linearly increased for the first 5 epochs until it reaches a maximum value of 0.2, after which it is decreased at each epoch, using a cosine scheduler. To determine the value of the hyperparameter $\rho$, we search over a small grid, by training 90\% of ImageNet using CrAM-k50, and using the remaining 10\% of the dataset for validation. We have found $\rho=0.05$ to give good results for CrAM-k50, and we have kept this value for all our other CrAM experiments. For SAM, we also use $\rho=0.05$, which is the standard value recommended by the authors of~\citet{foret2020sharpness}.

\section{Additional Image Classification Experiments}
\label{app:appendix-exps-images}

\subsection{Ablation Study for Alternative Updates}
\label{app:cifar10-ablation}

\paragraph{Comparison between CrAM and C-SAM.} We investigate the importance of individual components from the CrAM update by comparing against other similar updates, on the CIFAR-10 dataset, using a ResNet20 model. One such update can be obtained by following closely the derivations for SAM \citep{foret2020sharpness}. We assume $\|\perturb\|\leq \rho$, define $h(\x):= \fun(C(\x))$ (with $C$ the Top-K operator), and the loss $
\max_{\|\perturb\|\leq \rho} h(\w + \perturb)$. We can assume that for a small enough $\rho$ the function $h(\w + h)$ is differentiable in $\w$ (for example, the perturbation is small enough such that the Top-K mask stays constant). By using a first-order Taylor approximation of $h(\w + \perturb)$ around $\w$, together with the quadratic constraint for $\perturb$, we obtain 
$\perturb = \rho \cdot \frac{\nabla \fun(C(\w))}{\|\nabla \fun(C(\w))\|}$.
 This enables us to define the compressed-SAM (C-SAM) update as:
\begin{equation}
    \text{C-SAM:} \qquad \w_{t+1} = \w_t - \eta \nabla \fun\left(C\left(\w_t + \rho \frac{\nabla \fun(C(\w_t))}{\left\|\nabla \fun(C(\w_t))\right\|}\right)\right)\,.
    \label{eq:c-sam}
\end{equation}

We train C-SAM using sparsified gradients, for both the intermediate interpolation step, as well as in the final weight update. Namely, we always use the approximation: $\nabla L(C(\phi)) \approx M_\phi \cdot \nabla L(M_\phi \cdot \phi)$ for parameter $\phi$, where $M_\phi$ is the Top-K mask of $\phi$. To determine the value of the interpolation step $\rho$ in C-SAM and CrAM, we perform a grid search over a 90-10\% train/validation split of CIFAR-10; we re-run from 3 different seeds, using the best configurations and the entire training set, and report the test accuracy after 200 epochs of training. For fairness, we compare C-SAM with CrAM, and not CrAM$^{+}$, and for both we use sparsified gradients. For C-SAM-Multi and CrAM-Multi, we choose the sparsity levels during training uniformly at random among values in the interval $30\%-90\%$. 

The results in Table~\ref{tab:cram-vs-c-sam} show that C-SAM can be more robust at higher sparsity levels, but with the cost of an accuracy drop for the dense models. Moreover, the dense model can be improved using CrAM$^{+}$ with no additional cost, whereas for C-SAM such a modification would require a computational overhead. We additionally observed that the dense C-SAM-k70 model can be improved to almost match the accuracy of CrAM-k70, by not sparsifying the gradients in the interpolation; however, this also decreased the accuracy after one-shot pruning at higher sparsity, to a level similar to CrAM.

\begin{table}[h]
    \centering
    \scalebox{0.78}{
    \begin{tabular}{ccccccc}
     \toprule
        Method & Dense & 50\% Sparse & 60\% Sparse & 70\% Sparse & 80\% Sparse & 90\% Sparse\\
    \midrule
        CrAM-k50 & 92.8 $\pm$ 0.1 & 92.8 $\pm$ 0.1 & 92.7 $\pm$ 0.0 &
        92.0 $\pm$ 0.1 & 89.7 $\pm$ 0.2 & 73.5 $\pm$ 2.4  \\
         C-SAM-k50 & 92.6 $\pm$ 0.3 & 92.6 $\pm$ 0.3 & 92.5 $\pm$ 0.3  &
         92.0 $\pm$ 0.2 & 90.6 $\pm$ 0.2 & $81.0 \pm 1.0$ \\
    \midrule
        CrAM-k70 & 92.4 $\pm$ 0.2 & 92.4 $\pm$ 0.2 & 92.4 $\pm$ 0.2 & 92.3 $\pm$ 0.2 & 91.6 $\pm$ 0.1 & 81.1 $\pm$ 1.3   \\
        C-SAM-k70 & 91.4 $\pm$ 0.0 & 91.4 $\pm$ 0.0 & 91.4 $\pm$ 0.0 & 91.3 $\pm$ 0.1 & 91.0 $\pm$ 0.1 & 85.3 $\pm$ 0.5  \\
    \midrule
        CrAM-Multi & 92.6 $\pm$ 0.2 & 92.5 $\pm$ 0.2 & 92.4 $\pm$ 0.4 & 92.3 $\pm$ 0.2 & 91.9 $\pm$ 0.2 & 90.5 $\pm$ 0.2  \\
        C-SAM-Multi & 92.5 $\pm$ 0.2 & 92.5 $\pm$ 0.1 & 92.6 $\pm$ 0.2 & 92.5 $\pm$ 0.2 & 92.1 $\pm$ 0.2 & 91.0 $\pm$ 0.2  \\
    \bottomrule
    \end{tabular}
    }
    \vspace{5pt}
    \caption{(CIFAR-10/ResNet20) Comparison between CrAM and C-SAM. Test accuracy for the dense models and sparse models after one-shot pruning. We report the best value between the accuracy before and after BNT with 1000 training samples.}
    \label{tab:cram-vs-c-sam}
\end{table}

\paragraph{Importance of the extra-gradient step.} Furthermore, we explore the importance of the extra-gradient step in the CrAM update.
Notably, we investigate whether not using the extra-gradient achieves a similar effect to CrAM training. The removal of the extra gradient step would correspond to the following equation:
\begin{equation}
    \text{Top-K:} \qquad \w_{t+1} = \w_t - \eta \nabla \fun(C(\w_t))\,.
\end{equation}
Since the compression we use in our experiments is the Top-K sparsification, we simply call this update ``Top-K''. This update has been previously studied in \citet{lin2020dynamic}, where the dense gradients, computed with respect to the sparse parameters, are used in the model updated. Generally, we have training instability using this update, particularly at high sparsity. However, incorporating the optimization of the dense model, as well as sparsified gradients for the compressed parameters, greatly improved the stability and overall quality of the resulting models. These changes resulted in an update close to CrAM$^{+}$ and of the same computational complexity, which will be referred to as Top-K$^{+}$:
\begin{equation}
    \text{Top-K}^{+}: \qquad \w_{t+1} = \w_t - \eta (\nabla \fun(\w_t) + M_{\wextrap_t} \cdot \nabla \fun(\wextrap_t)), 
\end{equation}
where $\wextrap_t = C(\w_t)$ and $M_{\wextrap_t}$ is its mask after applying Top-K.

\begin{table}[h]
    \centering
    \scalebox{0.78}{
    \begin{tabular}{ccccccc}
     \toprule
        Method & Dense & 50\% Sparse & 60\% Sparse & 70\% Sparse & 80\% Sparse & 90\% Sparse\\
    \midrule
        CrAM$^{+}$-k50 & 93.1 $\pm$ 0.1  &  93.1 $\pm$ 0.1 & 93.0 $\pm$ 0.1 &
        92.3 $\pm$ 0.2 & 89.2 $\pm$ 0.2 & 71.1 $\pm$ 1.5  \\
         Top-K$^{+}$-k50 & 92.7 $\pm$ 0.1 & 92.6 $\pm$ 0.0 & 92.6 $\pm$ 0.1  &
         91.6 $\pm$ 0.1 & 86.5 $\pm$ 0.3 & 56.7 $\pm$ 1.0 \\
    \midrule
        CrAM$^{+}$-k70 & 92.8 $\pm$ 0.3 & 92.7 $\pm$ 0.2 & 92.7 $\pm$ 0.1 & 92.7 $\pm$ 0.0 & 91.9 $\pm$ 0. & 80.8 $\pm$ 1.4   \\
        Top-K$^{+}$-k70 & 92.7 $\pm$ 0.1 & 92.4 $\pm$ 0.2 &  92.3 $\pm$ 0.2 & 92.4 $\pm$ 0.1 & 91.0 $\pm$ 0.3 & 72.6 $\pm$ 2.8  \\
    \midrule
        CrAM$^{+}$-Multi & 93.2 $\pm$ 0.1 & 93.2 $\pm$ 0.1 & 93.0 $\pm$ 0.1 & 92.8 $\pm$ 0.2 & 92.4 $\pm$ 0.1 & 90.1 $\pm$ 0.2  \\
        Top-K$^{+}$-Multi & 92.5 $\pm$ 0.1 & 92.4 $\pm$ 0.1 & 92.3 $\pm$ 0.1 & 92.2 $\pm$ 0.2 & 91.7 $\pm$ 0.2 & 90.0 $\pm$ 0.2  \\
    \bottomrule
    \end{tabular}
    }
    \vspace{5pt}
    \caption{(CIFAR-10/ResNet20) Comparison between CrAM$^{+}$ and Top-K$^{+}$. Test accuracy for the dense models and sparse models after one-shot pruning. For all sparse results we report the best value between the accuracy before and after BNT with 1000 training samples.}
    \label{tab:cram-vs-topk}
\end{table}

The results of the comparison between CrAM$^{+}$ and Top-K$^{+}$ are presented in Table~\ref{tab:cram-vs-topk} and show that CrAM models tend to have higher accuracy for the dense models, as well as to be more robust to one-shot pruning at high sparsity (\eg CrAM$^{+}$ and Top-K$^{+}$ trained with 50\% or 70\% sparsity, and one-shot pruned to 80\% and 90\% sparsity). 

The comparison between CrAM and C-SAM or Top-K shows that, although all methods can achieve good results with one-shot pruning, CrAM-trained models have the best trade-off between preserving (or improving) the dense model accuracy, while having good performance after one-shot pruning, at a range of sparsity levels.

\paragraph{CrAM vs. SAM on compressed models.} We briefly highlight the differences between the CrAM update and simply applying SAM to compressed models. First, we note that the two objectives are different: while CrAM optimizes against the largest perturbation of the \emph{dense model}, to minimize its effect after compression, training SAM on sparse models would be equivalent to optimizing against the largest perturbation on the \emph{sparse model}. Namely, the SAM loss on compressed models could be written as $\max_{\|\perturb\| \leq \rho} L(C(\w) + \perturb)$, with the intermediate update $\wextrap = C(\w) + \rho \cdot \frac{\nabla L(C(\w))}{\| \nabla L(C(\w))\|}$. In practice, applying SAM to compressed models could only ensure generalization for the compressed models, whereas with CrAM (and, in particular, CrAM$^+$) we can explicitly optimize the dense model, and ensure that it is robust to one-shot pruning. 

\subsection{Importance of Sparse Gradients}
\label{app:sparse-grads}

For all our image classification experiments, we observed an improvement in the robustness to post-training one-shot pruning, when using a different straight-through estimator for the gradient $\nabla_\theta L(\wextrap_t)$, where $\wextrap_t = C(\w_t + \rho \cdot \nabla L(\w_t))$. Namely, instead of by-passing the Top-K operator in the gradient, and estimating $\nabla_{\w} L(\wextrap_t) \approx \nabla_{\w} L(\w)|_{\w =\wextrap_t}$, we can assume that the masks $M_t$ of $\wextrap_t$ change very little during training. This would allow us to use the approximation $\nabla_{\w} L(\wextrap_t) \approx M_t\cdot \nabla_{\w} L(\w)|_{\w=\wextrap_t}$. Please see Section~\ref{sec:theory-derive} for more details. Training both CrAM and CrAM$^{+}$ with this new approximation for the  CrAM loss gradient (which will be referred to as ``sparse gradient'') led to an improvement in the robustness to one-shot pruning, particularly at higher sparsity levels. Furthermore, our intuition that the masks are fairly constant during training is also confirmed experimentally: on CIFAR-10/ResNet20, trained with CrAM$^{+}$-k70, the difference between consecutive $\w_t$ masks was lower than 0.6\%. Interestingly, using sparse gradient under the same setup, encouraged more diversity in the masks, with the difference between them at later training stages increasing to around 2\%. We speculate this could be a potential reason for the improved robustness to pruning. Another aspect observed on CIFAR-10 experiments is that using sparse gradients tends to decrease the dense model accuracy, when training CrAM at lower sparsity; for example, the dense model for CrAM-k50 reached 93.4\% accuracy, which decreased to 92.8\% when using sparse gradients. For this reason, on ImageNet we only experimented with the dense version of CrAM-k50. Nonetheless, using sparse gradients improved the robustness to pruning in all cases. For a better illustration of all these effects, we provide the results obtained with these different versions of CrAM on ImageNet, in Table~\ref{tab:compare-spgrad-topk} for one-shot unstructured global magnitude pruning and in Table~\ref{tab:compare-spgrad-NM} for semi-structured N:M pruning.

\begin{table}[h]
\begin{minipage}[t]{.48\textwidth}
    \centering
    \scalebox{0.7}{
    \begin{tabular}{cccccc}
     \toprule
        \multirow{2}{*}{Model} & \multirow{2}{*}{Dense} & \multicolumn{4}{c}{Sparsity} \\ 
        & & 50\%  & 70\%  & 80\%  & 90\%   \\
    \midrule
        CrAM-k70 & 75.7 & 76.3 & 76.3 & 73.4 & 53.2 \\
        CrAM$^{+}$-k70 & \textbf{77.3} & \textbf{77.3} & 76.8 & 73.9 & 51.9 \\
        CrAM$^{+}$-k70 (SG) & 77.2 & 77.2 & \textbf{77.2} & \textbf{76.3} & \textbf{62.1} \\
        \midrule
        CrAM-Multi & 75.2 & 75.2 & 75.2 & 74.5 & 73.3 \\
        CrAM$^{+}$-Multi & 76.4 & 76.4 & 76.1 & 74.9 & 73.1 \\
        CrAM$^{+}$-Multi (SG) & \textbf{77.3} & \textbf{77.2} & \textbf{77.0} & \textbf{75.8} & \textbf{74.7} \\
    \bottomrule
    \end{tabular}
    }
    \vspace{5pt}
    \captionof{table}{(ImageNet/ResNet50) Dense and one-shot pruning (+BNT) results. CrAM$^{+}$ with sparse gradients (SG) improves the accuracy of the dense model, and its robustness to one-shot pruning.}
    \label{tab:compare-spgrad-topk}
\end{minipage}
\hfill
\vspace{6pt}
\begin{minipage}[t]{.48\textwidth}
        \centering
    \scalebox{0.7}{
    \begin{tabular}{cccc}
     \toprule
        \multirow{2}{*}{Model} & \multirow{2}{*}{Dense} & \multicolumn{2}{c}{Sparsity Pattern} \\ 
        & & 2:4  & 4:8   \\
    \midrule
     CrAM-N:M & 75.2 & 76.0 & 76.2 \\
     CrAM$^{+}$-N:M & 77.1 & 76.2 & 76.7 \\
    CrAM$^{+}$-N:M (SG) & 77.3 & 77.0 & 77.2 \\
    SR-STE & - & 77.0 & 77.4 \\
    \bottomrule
    \end{tabular}
    }
    \vspace{12pt}
    \captionof{table}{(ImageNet/ResNet50) Dense and semi-structured one-shot pruning (+BNT) results. CrAM$^{+}$-N:M with sparse gradients (SG) improves the accuracy of the dense model, and its robustness to N:M pruning.}
    \label{tab:compare-spgrad-NM}
\end{minipage}
\end{table}

\subsection{Variability of Batch Norm Tuning Results}
\label{app:BNT}
We emphasize that CrAM relies on a small calibration set of training samples to correct the Batch Norm statistics, namely running mean and variance, after pruning, particularly at high sparsity. We call this procedure Batch Norm Tuning (BNT). To ensure that the accuracies we report for sparse models are stable under the choice of the calibration set, we perform 10 independent trials of BNT, on 10 randomly chosen subsets of 1000 training samples, for each model and for different sparsity levels. The results of this experiment are presented in Table~\ref{tab:cram-BNT-stable}, which also contains the raw numbers used in Figure~\ref{fig:one-shot-imagenet}. Notice that the accuracy after one-shot pruning and BNT is very stable, with respect to the choice of the calibration set. In particular, for the CrAM$^{+}$ model, the standard deviation is $\leq 0.1\%$ across all sparsity levels considered. We also report the raw one-shot pruning accuracy, before BNT, for CrAM models in Table~\ref{tab:cram-no-BNT}.

\begin{table}[h]
    \centering
    \scalebox{0.78}{
    \begin{tabular}{ccccccc}
     \toprule
        \multirow{2}{*}{Model} & \multirow{2}{*}{Dense} & \multicolumn{4}{c}{Sparsity} \\ 
        & & 50\% & 60\% & 70\% & 80\% & 90\% \\
    \midrule
        Baseline & 77.22 & 75.87 $\pm$ 0.09 & 73.82 $\pm$ 0.07 & 68.86 $\pm$ 0.08 &  51.96 $\pm$ 0.27 & 8.57 $\pm$ 0.11  \\
        SAM & 77.35 & 76.47 $\pm$ 0.04 & 75.11 $\pm$ 0.1 & 71.87 $\pm$ 0.07 & 60.20 $\pm$ 0.13 & 18.25 $\pm$ 0.18 \\
        CrAM-k50 & 77.48 & 77.3 $\pm$ 0.07 & 76.61 $\pm$ 0.05 &74.77 $\pm$ 0.08 & 68.23 $\pm$ 0.11 & 33.04 $\pm$ 0.16 \\
        CrAM$^{+}$-k70 & 77.32 & 77.22 $\pm $ 0.05 & 77.1 $\pm$ 0.05 & 77.15 $\pm$ 0.05 & 76.3 $\pm$ 0.08 & 61.92 $\pm$ 0.11\\
        CrAM$^{+}$-Multi & 77.28 & 77.24 $\pm$ 0.06 & 77.05 $\pm$ 0.05 & 77.0 $\pm$ 0.04 & 75.8 $\pm$ 0.07 & 74.74 $\pm$ 0.04 \\
    \bottomrule
    \end{tabular}
    }
    \vspace{5pt}
    \caption{(ImageNet/ResNet50) Validation accuracy for the dense models, and after one-shot pruning using global magnitude pruning, followed by BNT on 1000 samples. The results for one-shot pruning are the mean accuracies, and their standard deviations, when BNT is performed on 10 different random calibration sets, of 1000 training samples each.}
    \label{tab:cram-BNT-stable}
\end{table}

\begin{table}[h]
    \centering
    \scalebox{0.78}{
    \begin{tabular}{ccccccc}
     \toprule
        \multirow{2}{*}{Model} & \multirow{2}{*}{Dense} & \multicolumn{4}{c}{Sparsity} \\ 
        & & 50\% & 60\% & 70\% & 80\% & 90\% \\
    \midrule
        Baseline & 77.22 & 74.35 & 68.9 & 46.36 & 2.0 &  0.1  \\
        SAM & 77.35 & 75.02 & 70.4 & 52.8 & 3.66 & 0.11 \\
        CrAM-k50 & 77.48 & 75.91 & 73.54 &  63.05 & 13.59 & 0.16 \\
        CrAM$^{+}$-k70 & 77.32 & 77.03 & 76.6 & 76.3  & 72.6 & 3.6 \\
        CrAM$^{+}$-Multi & 77.28 & 76.23 & 74.87 & 75.69 & 72.32 & 52.25 \\
    \bottomrule
    \end{tabular}
    }
    \vspace{5pt}
    \caption{(ImageNet/ResNet50) Validation accuracy for the dense models, and after one-shot pruning using global magnitude, before BNT.}
    \label{tab:cram-no-BNT}
\end{table}

\subsection{Results with Uniform Sparsity}
\label{app:unif}
In this section we show that CrAM models can be trained to be robust to different sparsity distributions, such as uniform. We train CrAM$^{+}$-Multi models for ImageNet/ResNet50 under the same setup as in Section~\ref{subsec:exps-imgnet}, but applying instead the Top-K operator at uniform sparsity across all prunable parameters (\ie excluding BatchNorm and biases), while keeping the first and last layers dense. The resulting dense model achieves 77.1\% accuracy, while one-shot uniform pruning at 80\% and 90\% sparsities gives, after BNT, 
75.6\% and 75.1\%
accuracy, respectively. Moreover, this model is also robust to one-shot pruning using global magnitude (\eg 75.4\% accuracy at 80\% sparsity). Conversely, CrAM$^{+}$-Multi trained with global magnitude is robust to one-shot pruning using uniform magnitude 
(\eg 76.9\% and 75.9\% accuracy at 70\% and 80\% sparsity, respectively).
This suggests that CrAM-trained models can be robust to one-shot pruning using sparsity distributions different from the ones used during training.

\subsection{CrAM for N:M Sparsity Patterns}
\label{app:N-M}
In this section, we show our full results regarding the robustness of CrAM models against semi-structured N:M sparsity patterns, where out of each block of M weights, N are sparse. In particular, the 2:4 pattern is supported on modern Ampere NVIDIA architectures, where it has been shown to provide speed-ups \citep{NVIDIASparse}. 
We train CrAM$^{+}$ models with the N:M pattern, by randomly choosing at each optimization step between the 2:4 or 4:8 projections; this model will be referred to as ``CrAM$^{+}$-N:M''. Similar to the previous experiments, we also use sparse gradients for the pruned model perturbation and have found this to have a positive impact on the one-shot pruned models. In Table~\ref{tab:NM-results} we show the one-shot pruning results (after BNT with 1000 samples). Note that CrAM$^{+}$-N:M models do not lose accuracy when they are pruned one-shot using the 2:4 and 4:8 patterns, which is competitve with state-of-the-art methods for training N:M sparse models, such as SR-STE \citep{zhou2021learning}; however, SR-STE requires a different training run for each sparsity profile. CrAM$^{+}$-N:M model is also robust to one-shot pruning using unstructured patterns, at moderate sparsity levels; for example, the results in Table~\ref{tab:NM-unif-global} show that models trained with CrAM$^{+}$-N:M can be pruned one-shot to 70\% sparsity with a minor accuracy loss, compared to the dense baseline.

\begin{table}[h]

\begin{minipage}[t]{.48\textwidth}
    \centering
    \scalebox{0.85}{
    \begin{tabular}{cccc}
    \toprule
    Model & Dense & 2:4 & 4:8 \\
    \midrule
    Dense & 77.3 & 66.1 & 69.1 \\ 
    SAM & 77.4 & 69.4 & 71.8 \\
    CrAM-k50 & 77.5 & 72.1 & 73.3 \\
    CrAM$^{+}$-N:M & 77.3 & 77.0 & 77.2 \\
    SR-STE & - & 77.0 & 77.4 \\
    \bottomrule
    \end{tabular}
    }
    \caption{Accuracy (\%) after one-shot pruning (+BNT) using semi-structured 2:4 and 4:8 patterns.}
    \label{tab:NM-results}
\end{minipage}%
\hfill
\begin{minipage}[t]{.48\textwidth}
    \centering
    \scalebox{0.85}{
    \begin{tabular}{ccc}
     \toprule
        Top-K & 50\% & 70\%  \\
      \midrule
        Global  & 77.3 & 76.5 \\
        Uniform & 77.3 & 76.5 \\
    \bottomrule
    \end{tabular}
    }
    \vspace{12pt}
    \caption{[CrAM$^{+}$-N:M] Accuracy (\%) after one-shot pruning (+BNT), using unstructured sparsity}
    \label{tab:NM-unif-global}
\end{minipage}%
\end{table}

\subsection{CrAM With Infrequent Mask Updates}
\label{subsec:infreq-masks}

As previously mentioned, when training image models with CrAM$^+$, using sparse gradients of the intermediate model perturbation substantially improved both the accuracy of the dense final model, as well as after one-shot pruning. One hypothesis that would enable this approximation of the gradient would be that the masks of the compression perturbation change very little during training. We test the validity of this hypothesis by training a version of CrAM$^+$ that only does infrequent mask updates. Namely, the masks obtained from the Top-K compression operator are kept fixed for a number of consecutive $\tau$ training iterations. In the case of CrAM-Multi,  the masks for each sparsity level are changed each after $\tau$ iterations with the corresponding sparsity target. We note that infrequent mask updates improve the practical training time of CrAM, as the Top-K operator can be skipped for most iterations. This brings the cost of an average CrAM$^+$ iteration to the same level as a SAM iteration, though we note that in theory, with specialized hardware, the computational cost of CrAM can be further decreased due to the presence of sparse operators.  We experiment using the same setup for training ImageNet on ResNet50 with CrAM$^+$-Multi, with global magnitude pruning at sparsity $k\in\{50\%, 70\%, 90\%\}$, and perform two separate runs, by varying the mask update frequency to $\tau\in\{20, 100\}$ iterations. As before, we also use BNT after pruning, on a subset of 1000 training samples, consisting of one example per class. The results presented in Table~\ref{tab:cram-infreq-updates} show that using infrequent mask updates only has a small impact on the initial results; namely, we observe that the results after one-shot pruning are slightly worse compared to the default ones, particularly at higher sparsity. In particular, the results for 80\% sparsity have decreased the most, a level which was not explicitly used during training. We also repeated the same experiment in the CrAM-finetuning setup for ResNet50, with $\tau=100$ iterations, and observed that results after one-shot pruning in fact improved slightly. This is particularly encouraging, as using CrAM only for finetuning is a more attractive use-case, due to the reduced computational costs.

\begin{table}[h]
\centering 

\scalebox{0.95}{
\begin{tabular}{cccccc}
\toprule
Frequency & \multicolumn{4}{c}{Sparsity (\%)} \\
$\tau$ & 0 & 50 & 70 & 80 & 90 \\
\midrule
1 & 77.3 & 77.2 & 77.1 & 75.9 & 74.8 \\
20 & 77.4 & 77.4 & 77.2 & 75.5 & 74.8 \\
100 & 77.3 & 77.3 & 76.9 & 75.3 & 74.5 \\
\bottomrule
\end{tabular}
}
\caption{(ImageNet/ResNet50) Validation accuracy (\%) for the dense and sparse CrAM$^+$-Multi models trained with sparse perturbed gradients, using infrequent mask updates of the global Top-K operator.}
\label{tab:cram-infreq-updates}
\end{table}

\subsection{Results on Quantization} 
\label{app:quant}
We have shown through previous experiments that CrAM can be successfully used with the Top-K operator to obtain models that preserve or improve the dense baseline's accuracy, while also being robust to post-training one-shot pruning, to multiple sparsity levels. In this section we show encouraging evidence that CrAM can also be adapted to work with quantization. Specifically, we use CrAM$^{+}$ where the compression operator $C$ is the symmetric per-channel weight quantization to 4 bits (round to nearest integer), and finetune pretrained Torchivison ResNet18 and ResNet50 ImageNet models, for 10 epochs, using the same hyperparameters as in the previous experiments for sparsity.  

\begin{table}[h]

    \centering
    \scalebox{0.95}{
    \begin{tabular}{ccc|cc}
     \toprule
        \multirow{2}{*}{Model} &  \multicolumn{2}{c|}{ResNet18} & \multicolumn{2}{c}{ResNet50}\\ 
        & Dense & 4 Bits  & Dense & 4 Bits \\
    \midrule
        Baseline  & 69.8 & 66.2 & 76.1 & 74.1 \\
        SAM & \textbf{70.5} & 67.1 & \textbf{76.9} & 74.8  \\
        CrAM$^{+}$-4Bits & 70.1 & \textbf{69.3} & 76.7 & \textbf{75.8} \\
    \bottomrule
    \end{tabular}
    }
    \caption{[ImageNet] Validation acc. (\%) for the dense models, and after symmetric per-channel 4 bits quantization (+BNT).}
    \label{tab:cram-quant}

\end{table}

\subsection{Comparison With Other Methods on CIFAR-10}
\label{app:cifar10-compare}

In this section we provide additional results accompanying those presented in Section~\ref{subsec:exps-cifar10}. Namely, we provide comparison between one-shot pruning CrAM$^{+}$-Multi vs. standard dense baselines (SGD, SAM), and we provide numbers before and after BNT for sparse models on VGG-16 and ResNet18.

In Table~\ref{tab:cifar10-dense-sam} we show the accuracy of the dense baseline, SAM and CrAM$^{+}$-Multi, trained using sparsity levels sampled uniformly at random in the range $30\%-90\%$ on ResNet20, before and after one-shot pruning at different sparsities. The results after one-shot pruning are presented after BNT over a random subset of 1000 train samples, over 100 batches. We note there are small variations in the results after BNT, due to the choice of the random calibration set. These variations are small ($\pm$0.1/0.2\%) for CrAM$^{+}$-Multi models, across all sparsity levels considered, but they are larger for the one-shot pruned dense baselines at high sparsity (e.g. 80\% and 90\%). Moreover, the accuracy before BNT is still high for CrAM at lower sparsity levels (e.g. 91.9\% at 70\% sparsity), but it degrades at high sparsity (e.g. 50.5\% at 90\% sparsity). We believe that this is only due to the BatchNorm statistics, which are adapted to the dense model during training, but they no longer reflect the distribution shift after weight pruning. This is confirmed by the fact that 90\% sparse models improve to over 90\% test accuracy after only a few iterations of BNT, and are very robust to the choice of the calibration set.

\begin{table}[h]

    \centering
    \scalebox{0.8}{
    \begin{tabular}{ccccccc}
     \toprule
        \multirow{2}{*}{Model} &  \multirow{2}{*}{Dense} & \multicolumn{5}{c}{Sparsity}\\ 
        & &  50\% & 60\% & 70\% & 80\% & 90\% \\
    \midrule
        Baseline  & 93.0 $\pm$ 0.1 & 92.2 $\pm$ 0.0 & 91.0 $\pm$ 0.3 & 88.0 $\pm$ 0.2 & 78.0 $\pm$ 1.1 & 45.8 $\pm$ 3.0 \\
        SAM & \textbf{93.5 $\pm$ 0.1} & 92.8 $\pm$ 0.2 & 92.4 $\pm$ 0.0 & 90.7 $\pm$ 0.3 & 85.2 $\pm$ 0.4 & 54.6 $\pm$ 1.7\\
        CrAM$^{+}$-Multi & 93.2 $\pm$ 0.1 & 93.2 $\pm$ 0.1 & 93.1 $\pm$ 0.1 & 92.9 $\pm$ 0.1 & 92.4 $\pm$ 0.1 & 90.3 $\pm$ 0.1 \\ 
    \bottomrule
    \end{tabular}
    }
    \caption{(CIFAR-10/ResNet20) Test acc. (\%) for the dense models, and after one-shot pruning (+BNT). The baseline is the model after SGD training. For all models we apply one-shot pruning at different sparsity (+BNT), but no additional retraining. Results are averaged across 3 runs from different seeds.}
    \label{tab:cifar10-dense-sam}

\end{table}

Moreover, we show the extended results of CrAM$^{+}$-k95 discussed in Section~\ref{subsec:exps-cifar10}, before and after BNT, on ResNet18 and VGG-16. From Table~\ref{tab:cifar10-big-NN} we can see that one-shot pruning CrAM$^{+}$-k95 without BNT preserves accuracy up to 80\% sparsity, after which BNT is required to correct the BatchNorm statistics. Remarkably, the VGG-16 models at 97\% and 98\% sparsity have very low accuracy, which is improved greatly by BNT. Furthermore, also in this highly sparse regimes the accuracy is very robust with respect to the choice of the calibration set for BNT.

\begin{table}[h]

    \centering
    \scalebox{0.8}{
    \begin{tabular}{ccccccccc}
     \toprule
        \multirow{2}{*}{Architecture} & \multirow{2}{*}{BNT} & \multicolumn{7}{c}{Sparsity}\\ 
        &  &  50\% & 80\% & 90\% & 93\% & 95\% & 97\% & 98\%\\
    \midrule
       \multirow{2}{*}{ResNet18} & No  & 95.7$\pm$0.1 & 95.3$\pm$0.2 & 93.6$\pm$0.7 & 92.0$\pm$1.4 & 89.8$\pm$2.2 & 81.4$\pm$6.3 & 48.7$\pm$5.8 \\
       
       & Yes & 95.6$\pm$0.0  & 95.7$\pm$0.0 & 95.5$\pm$0.1 & 95.5$\pm$0.1 & 95.5$\pm$0.1 & 95.2$\pm$0.0  & 94.5$\pm$0.3 \\
    \midrule
    \multirow{2}{*}{VGG-16} & No & 94.2$\pm$0.1 & 93.9$\pm$0.2 & 86.7$\pm$1.6 & 48.5$\pm$1.7 & 19.8$\pm$15.4 & 16.0$\pm$10.2 & 12.4$\pm$4.1 \\
    & Yes & 94.2$\pm$0.1 & 94.2$\pm$0.1 & 94.0$\pm$0.1 & 94.0$\pm$0.2 & 94.1$\pm$ 0.1 & 93.8$\pm$0.2 & 93.0$\pm$0.2 \\

    \bottomrule
    \end{tabular}
    }
    \caption{(CIFAR-10) Test acc. (\%) for the sparse models obtained with one-shot-pruning from CrAM$^{+}$-k95, before and after BNT. Results are averaged across 3 runs from different seeds.}
    \label{tab:cifar10-big-NN}

\end{table}

\section{Language Models - reproducibility and hyperparameters}
\label{app:bert-hparams}
To ease reproducibility of our results, we conduct all of our experiments with the popular open-source libraries: Transformers~\citep{wolf-etal-2020-transformers}, and SparseML~\citep{pmlr-v119-kurtz20a}. We use the publicly available datasets via~\citet{hf-datasets}, and focus on the BERT-base~\citep{Devlin2019BERTPO} as it is one of the most commonly used language models. It is composed of 12 identical transformer layers with 110M parameters. Following community standards, we prune all weights of the encoder part (85M) and report sparsities relative to this number.

\paragraph{General setup.} Our SQuADv1.1 fine-tuning recipe with Adam, SAM and CrAM mostly follows the already established hyper-parameters~\citep{Devlin2019BERTPO, wolf-etal-2020-transformers}: start from the pretrained \texttt{bert-base-uncased} (available for download at \href{https://huggingface.co/bert-base-uncased}{https://huggingface.co/bert-base-uncased}), \texttt{batch-size=16}, \texttt{max-sequence-length=384}, \texttt{doc-stride=128}.

\paragraph{Adam, SAM, and CrAM optimization.} For other hyper-parameters we conduct a grid search for each optimizer independently over the following values: \texttt{learning-rate} $\in \{ \mathrm{3e}{-5}, \mathrm{5e}{-5}, \mathrm{8e}{-5} \}$; \texttt{num-train-epochs} $\in \{ 2, 3 \}$ for SAM and CrAM, and \texttt{num-train-epochs} $\in \{2, 3, 4, 6\}$ for Adam (we allow 2x more epochs for fairness to SAM and CrAM); \texttt{label-smoothing-factor} $\in \{ 0.0, 0.1, 0.2 \}$. We freeze the embedding layer in all experiments. To determine the value of the hyperparameter $\rho$, we performed a grid search over values in the range $\mathrm{1e}{-4}$ to $\mathrm{1e}{-1}$. For each optimizer we pick the set of hyperparameters that produces the best results after one-shot magnitude pruning to 50\% sparsity, and they are as follows:
\begin{itemize}
    \item Adam: \texttt{num-train-epochs=2}, \texttt{learning-rate=$\mathrm{8e}{-5}$}, \texttt{label-smoothing-ratio=0.1}
    \item SAM: \texttt{num-train-epochs=2}, \texttt{learning-rate=$\mathrm{8e}{-5}$}, \texttt{label-smoothing-ratio=0.0}, \texttt{$\rho$=0.01}
    \item CrAM (all CrAM runs use the same set of hyperparameters): \texttt{num-train-epochs=3}, \texttt{learning-rate=$\mathrm{8e}{-5}$}, \texttt{label-smoothing-ratio=0.2}, \texttt{$\rho$=0.005}
\end{itemize}

At each CrAM optimization step we apply Top-K (\ie magnitude) sparsification over all layers uniformly.

\paragraph{One-shot pruning.} We apply one-shot pruning with two different pruners: magnitude and oBERT. For one-shot magnitude pruning we impose uniform sparsity distribution over all layers. For one-shot oBERT pruning we adopt the suggested set of hyper-parameters by authors, which we briefly describe here for completeness: 1024 gradients, dampening $\mathrm{1e}{-7}$, block-size 50, 4 recomputations, global sparsity distribution over all layers. For more details please refer to the oBERT paper~\citep{kurtic2022optimal}.

\paragraph{Sparse fine-tuning of one-shot pruned models.} We fine-tune one-shot oBERT-pruned models with the fixed sparsity mask and Adam optimizer. To identify the best set of hyperparameters for fine-tuning of the sparse model, we conduct a grid search over the following parameters: \texttt{learning-rate} $\in \{ \mathrm{3e}{-5}, \mathrm{5e}{-5}, \mathrm{8e}{-5}, \mathrm{1e}{-4} \}$, \texttt{num-train-epochs} $\in \{ 1, 2 \}$, \texttt{label-smoothing-ratio} $\in \{ 0.0, 0.2 \}$, \texttt{warmup-ratio} $\in \{ 0.0, 0.1 \}$. We freeze the embedding layer and employ early-stopping technique to prevent overfitting.

\begin{table}[h]
\begin{minipage}[t]{0.48\textwidth}
    \hspace{-10pt}
    \centering
    \scalebox{0.7}{
    \begin{tabular}{cccccc}
     \toprule
        \multirow{2}{*}{Model} & \multirow{2}{*}{Dense} & \multicolumn{4}{c}{Sparsity} \\
        & & 50\% & 60\% & 70\% & 80\% \\
    \midrule
        p(Adam) = 0.0 & 88.7 & \textbf{88.3} & \textbf{88.1} & \textbf{86.8} & 82.5\\
    \midrule
        p(Adam) = 0.1 & 87.6 & 87.4 & 87.4 & 86.5 & \textbf{84.0} \\
        p(Adam) = 0.3 & 87.5 & 87.5 & 87.2 & 86.5 & 83.6 \\
        p(Adam) = 0.5 & 87.8 & 87.7 & 87.2 & 86.4 & 83.0 \\
        p(Adam) = 0.8 & 87.0 & 87.1 & 86.8 & 85.2 & 79.1 \\
    \bottomrule
    \end{tabular}
    }
    \caption{(SQuADv1.1/BERT-base) Validation F1 score of models optimized with CrAM$^{+}$-Multi where at each step with probability $\mathrm{p(Adam)}$ the standard Adam step is applied instead of the CrAM$^{+}$-Multi step.}
    \label{tab:cram-one-shot-squad-adamprob}
\end{minipage}
\hfill
\begin{minipage}[t]{0.48\textwidth}
    \centering
    \scalebox{0.7}{
    \begin{tabular}{ccccc}
     \toprule
        \multirow{2}{*}{Model} & \multicolumn{2}{c}{\textbf{4-cores, batch-size=1}} & \multicolumn{2}{c}{\textbf{16-cores, batch-size=128}} \\
        & \makecell{Throughput\\(items/sec)} & Speed-up & \makecell{Throughput\\(items/sec)} & Speed-up \\
        \midrule
        Dense & 4.0 & 1.0x & 14.2 & 1.0x\\
        \midrule
        50\% sparse & 4.5 & 1.1x & 18.0 & 1.3x \\
        60\% sparse & 5.2 & 1.3x & 21.8 & 1.5x \\
        70\% sparse & 6.3 & 1.6x & 26.0 & 1.8x \\
        80\% sparse & 8.0 & 2.0x & 31.9 & 2.3x \\
    \bottomrule
    \end{tabular}
    }
    \vspace{2pt}
    \caption{(SQuADv1.1/BERT-base) Speed-ups of pruned BERT-base models relative to the dense model, benchmarked with the sparsity-aware inference engine DeepSparse (version 1.0.2)~\citep{pmlr-v119-kurtz20a, deepsparse} in two different scenarios on AMD EPYC 7702 64-Core Processor.}
    \label{tab:cram-one-shot-squad-speedup}
\end{minipage}
\end{table}

\paragraph{Speed-ups of pruned BERT-base models.} In Table~\ref{tab:cram-one-shot-squad-speedup} we present speed-ups of our pruned models in the sparsity-aware CPU inference engine DeepSparse~\citep{pmlr-v119-kurtz20a, deepsparse} (version 1.0.2). We consider two different scenarios and report speed-ups relative to the dense model benchmarked in the same environment.

\paragraph{Robustness to one-shot pruning.} In Table~\ref{tab:cram-one-shot-squad-adamprob} we present results for runs where CrAM is not used at every optimization step, and demonstrate that even in this setup the obtained models are still more robust to pruning compared to the models fully fine-tuned either with Adam or SAM optimizers reported in Table~\ref{tab:cram-one-shot-squad}.

\end{document}